\providecommand{\customgenericname}{}
\newtheorem{assumption}{Assumption}
\newtheorem{definition}{Definition}
\newtheorem{theorem}{Theorem}
\newtheorem{lemma}{Lemma}
\def\blfootnote{\gdef\@thefnmark{}\@footnotetext}
\newcommand\independent{\protect\mathpalette{\protect\independenT}{\perp}}
\def\independenT#1#2{\mathrel{\rlap{$#1#2$}\mkern2mu{#1#2}}}
\title{The Reduced PC-Algorithm: Improved Causal Structure Learning in Large Random Networks}
\author{
        Arjun Sondhi \\
        Department of Biostatistics\\
        University of Washington\\
        asondhi@uw.edu \\
            \and
        Ali Shojaie \\
        Department of Biostatistics\\
        University of Washington\\
        ashojaie@uw.edu
}
\begin{document}

\maketitle

\begin{abstract}
We consider the task of estimating a high-dimensional directed acyclic graph, given observations from a linear structural equation model with arbitrary noise distribution. By exploiting properties of common random graphs, we develop a new algorithm that requires conditioning only on small sets of variables. The proposed algorithm, which is essentially a modified version of the PC-Algorithm, offers significant gains in both computational complexity and estimation accuracy. In particular, it results in more efficient and accurate estimation in large networks containing hub nodes, which are common in biological systems. We prove the consistency of the proposed algorithm, and show that it also requires a less stringent faithfulness assumption than the PC-Algorithm. Simulations in low and high-dimensional settings are used to illustrate these findings. An application to gene expression data suggests that the proposed algorithm can identify a greater number of clinically relevant genes than current methods.
\end{abstract}

\section{Introduction}

Directed acyclic graphs, or DAGs, are commonly used to represent causal relationships in complex biological systems. For example, in gene regulatory networks, directed edges represent regulatory interactions among genes, which are  represented as nodes of the graph. While causal effects in biological networks can be accurately inferred from perturbation experiments \citep{shojaie2013ripe}---including single or double gene knockouts \citep{ko1, ko2}---these are costly to run. Estimating DAGs from observational data is thus an important exploratory task for generating causal hypotheses \citep{friedman, jansen}, and designing more efficient experiments. 

Since the number of possible directed graphs grows super-exponentially in the number of nodes, estimation of DAGs is an NP-hard problem \citep{chick}. Methods of estimating DAGs from observational data can be broadly categorized into three classes. The first class, score-based methods, search over the space of all possible graphs, and attempt to maximize a goodness-of-fit score, generally using a greedy algorithm. Examples include the hill climbing and tabu search algorithms \citep{bnlearn}, as well as Bayesian approaches \citep{eaton2012bayesian}. The second class, constraint-based methods, first estimate the graph skeleton by performing conditional independence tests; the skeleton of a directed acyclic graph is the undirected graph obtained by removing the direction of edges. Information from conditional independence relations is then used to partially orient the edges of the graph. The resulting completed partially directed acyclic graph (CPDAG) represents the class of all directed acyclic graphs that are Markov equivalent, and therefore not distinguishable from observational data. The most well-known constraint-based method is the PC-Algorithm \citep{sprites}, which was popularized by \citet{kalisch} for high-dimensional estimation. Finally, hybrid methods combine score and constraint-based approaches. For example, the Max-Min Hill Climbing algorithm \citep{mmhc} estimates the skeleton using a constraint-based method, and then orients the edges by using a greedy search algorithm. Sparsity-inducing regularization approaches have also been used to develop efficient hybrid methods \citep{schmidt}.

Estimating DAGs in high dimensions introduces new computational and statistical challenges. Until recently, graph recovery in high dimensions was only established for the PC-Algorithm \citep{kalisch} and hybrid constraint-based methods \citep{ha2016penpc}. While the recent work of \citet{nandy2015} extends these results to score-based algorithms and their hybrid extensions, the PC-Algorithm is still considered a gold standard in high-dimensional sparse settings, due to its polynomial time complexity \citep{kalisch}. Moreover, constraint-based methods are indeed the building blocks of various hybrid approaches. Therefore, we primarily focus on constraint-based methods in this paper.

Despite its appealing features, the PC-Algorithm entails several properties that do not scale well to common high-dimensional settings. Specifically, large real-world biological systems are known to commonly be sparse graphs containing a small number of highly connected \emph{hub} nodes \citep{nets2, nets1}. In such graphs, the average node degree will be small, while the maximum tends to be much larger, and increases with the number of nodes. This is particularly problematic for the PC-Algorithm, whose computational and sample complexities scale with the \emph{maximum node degree} in the graph. Moreover, the recent work by \citet{uhler} shows that the distributional assumptions required for high-dimensional consistency of the PC-Algorithm are overly restrictive in practice, and that the class of graphs which do not satisfy these assumptions is large. Although work has been done by \citet{peterscausalnoise} on estimating DAGs defined over a larger class of probability models, the resulting methods also do not scale to high dimensions.

A common limitation of existing methods for estimating DAGs is that they do not account for structural properties of large networks. For instance, the PC-Algorithm only incorporates the sparsity of the network, by assuming that the maximum node degree in the graph skeleton is small relative to the sample size. However, real-world networks, particularly those observed in biology, are known to posses a number of other important properties. Of particular interest in estimating DAGs is the so-called \emph{local separation property} of large networks \citep{anandkumar}, which implies that the number of short paths between any two nodes is bounded. This property is observed in many large sparse networks, including polytrees, Erd\H{o}s-R{\'e}nyi, power law, and small world graphs \citep{durrett}. Power law graphs are of particular interest in many biological applications, as they allow for the presence of hub nodes. 

In this paper, we propose a low-complexity constraint-based method for estimating high-dimensional sparse DAGs. The new method, termed \emph{reduced PC} (rPC), exploits the local separation property of large random networks, which was used by \citet{anandkumar} in estimation of undirected graphical models. 
We show that rPC can consistently estimate the skeleton of high-dimensional DAGs by conditioning only on sets of small cardinality. This is in contrast to previous heuristic DAG learning approaches that set an upper bound on the number of parents for each node \citep{AC94, HD93}, which is an assumption that cannot be justified in many real-world networks. 
We also show that computational and sample complexities of rPC only depend on average sparsity of the graph---a notion that is made more precise in Sections~\ref{sec:rpct} and~\ref{sec:asymp}. This leads to considerable advantages over the PC-Algorithm, whose computational and sample complexities scale with the maximal node degree. Moreover, these properties hold for linear structural equation models \citep{shojaie} with arbitrary noise distributions, and require a weaker faithfulness conditions on the underlying probability distributions than the PC-Algorithm. 
We present two version of the rPC algorithm: a ``full" version for which we provide theoretical guarantees, and an approximate version which is much faster and performs almost identically in practice.

The rest of the paper is organized as follows. In Section~\ref{sec:prelim} we review basic properties of graphical models over DAGs, and give a short overview of the PC-Algorithm. Our new algorithm is presented in Section~\ref{sec:rpct} and its properties, including consistency in high dimensions are discussed in Section~\ref{sec:asymp}. Results of simulation studies and a real data example concerning the estimation of gene regulatory networks are presented in Sections~\ref{sec:sims} and \ref{sec:appl}, respectively. We end with a brief discussion in Section~\ref{sec:disc}. Technical proofs and additional simulations are presented in the appendices. 

\section{Preliminaries}\label{sec:prelim}

In this section, we review relevant properties of graphical models defined over DAGs, and briefly describe the theory and implementation of the PC-Algorithm.

\subsection{Background}

For $p$ random variables $X_1, \dots, X_p$, we define a graph $G = (V, E)$ with vertices, or nodes, $V = \left\{1, \dots, p \right\}$ such that variable $X_j$ corresponds to node $j$. The edge set $E \subset V \times V$ contains directed edges; that is, $(j,k) \in E$ implies $(k,j) \not \in E$. Furthermore, there are no directed cycles in $G$. We denote an edge from $j$ to $k$ as $j \rightarrow k$ and call $j$ a parent of $k$, and $k$ a child of $j$. The set of parents of node $k$ is denoted $pa(k)$, while the set of nodes adjacent to it, or all of $k$'s parents and children, is denoted $adj(k)$. These notations are also used for the corresponding random variable $X_k$. We assume there are no hidden common parents of node pairs (that is, no unmeasured confounders). The degree of node $k$ is defined as the number of nodes which are adjacent to it, $|adj(k)|$; we denote the maximal degree in the graph as $d_{max}$. A triplet of nodes $(i, j,k)$ is called an \emph{unshielded triple} if $i$ and $j$ are adjacent to $k$ but $i$ and $j$ are not adjacent. An unshielded triple is called a \emph{v-structure} if $i \rightarrow j \leftarrow k$. 

We assume random variables follow a linear structural equation model (SEM), 
\begin{equation}
X_k = \sum_{j \in pa(k)} \rho_{jk} X_j + \epsilon_k,
\label{sem}
\end{equation}
where for $k = 1, \ldots, p$, $\epsilon_k$ are independent random variables with finite variance, and $\rho_{jk}$ are fixed unknown constants. The directed Markov property, stated below, is usually assumed in order to connect the joint probability distribution of $X_1, \ldots, X_p$ to the structure of the graph $G$.

\begin{definition}
A probability distribution is Markov on a DAG $G = (V,E)$ if, conditional on its parents, every random variable $X_k$ is independent of its non-descendants; that is, $X_k \independent X_j \mid pa(X_k)$ for all $j \in V$ which are non-descendants of $k$. 
\end{definition}
Although this assumption allows us to connect conditional independence relationships to the DAG structure, there are generally multiple graphs that generate the same distribution under the Markov property. More concretely, DAGs are Markov equivalent if they have the same skeleton and the same set of v-structures. Therefore, constraint-based methods focus primarily on estimating the skeleton of the DAGs from observational data. Conditional independence relations identified when learning the skeleton are then used to orient some of the edges to obtain the CPDAG, which represents the Markov equivalence class of directed graphs \citep{kalisch}. 

We next define d-separation, a graphical property which is used to read conditional independence relationships from the DAG structure.
\begin{definition}
In a DAG $G$, two nodes $k_1$ and $k_2$ are d-separated by a set $S$ if and only if, for all paths $\pi$ between $k_1$ and $k_2$: \\
(i) $\pi$ contains a chain $i \rightarrow m \rightarrow j$ or a fork $i \leftarrow m \rightarrow j$ such that the middle node $m$ is in $S$, or \\
(ii) $\pi$ contains an inverted fork (or collider) $i \rightarrow m \leftarrow j$ such that the middle node $m$ is not in $S$ and no descendant of $m$ is in $S$. 
\end{definition}
A path $\pi$ which does not satisfy the requirements of this definition is known as a \textit{d-connecting path}. Using observed data, d-separations in a graph $G$ can be identified based on conditional independence relationships. To this end, we require the following assumption, known as \textit{faithfulness}, on the probability distribution of random variable on $G$. 
\begin{definition}
A probability distribution is faithful to a DAG $G$ if $X_i \independent X_j \mid X_S$ whenever $i$ and $j$ are d-separated by $S$. 
\end{definition}

\subsection{The PC-Algorithm}\label{sec:pc}

Together, d-separation and faithfulness suggest a simple algorithm for recovering the DAG skeleton. If we discover that $X_i \independent X_j \mid S$ for some set $S$, then there cannot be an edge $(i,j) \in E$. Conversely, if we discover $X_i \not \independent X_j \mid S$ for all possible sets $S$, then there must be an edge $(i,j) \in E$. Therefore, under faithfulness, an obvious strategy for skeleton estimation would be to test all possible conditional independence relations for each pair of variables; that is, test whether $X_i \independent X_j \mid S$ for any $S \subset V \setminus \left\{ i,j \right\}$. While this strategy is computationally infeasible for large $p$, and statistically problematic when $p > n$, it forms the basis of the PC-Algorithm. The PC-Algorithm starts with a complete undirected graph and deletes edges $(i,j)$ if a set $S$ can be found such that $X_i \independent X_j \mid S$. The algorithm also uses the fact that if such an $S$ exists, then there exists a set $S'$ such that all nodes in $S'$ are adjacent to $i$ or $j$ and $X_i \independent X_j \mid S'$. Thus, at each step of the algorithm, only local neighbourhoods need to be examined in order to find the separating sets. 

Although consistent for sufficiently sparse high-dimensional DAGs, the PC-Algorithm's computational and sample complexity scale with the maximal degree of the graph, $d_{max}$. Specifically, the algorithm's computational complexity is $O\big(p^{d_{max}}\big)$ and its sample complexity is $\Omega\big\{ \max(\log p, d_{max}^{1/b}) \big\}$ for some $b \in (0,1]$. This is problematic for graphs with highly connected hub nodes, which are common in real-world networks \citep{nets2, nets1}. In such graphs, $d_{max}$ typically grows with the number of nodes $p$, leading to poor accuracy and runtime for the PC-Algorithm. 

Another limitation of the PC-Algorithm is that it requires partial correlations between adjacent nodes to be bounded away from 0; this requirement, which needs to hold for all conditioning sets $S$ such that $|S| \le d_{max}$, is known as \textit{restricted strong faithfulness} \citep{uhler}, and is defined next. 

\begin{definition}\label{def::PCfaithfulness}
Given $\lambda \in (0,1)$, a distribution $P$ is said to be restricted $\lambda$-strong-faithful to a DAG $G = (V,E)$ if the following conditions are satisfied: \\
(i) $\min \left\{ |\rho(X_i, X_j \mid X_S) | : (i,j) \in E, S \subset V \setminus \left\{ i,j \right\}, |S| \le d_{max} \right\} > \lambda$, and \\
(ii) $\min \left\{ |\rho(X_i, X_j \mid X_S) | : (i,j,S) \in N_G \right\} > \lambda$, where $N_G$ is the set of
triples $(i, j,S)$ such that $i, j$ are not adjacent, but there exists $k \in V$ making $(i, j, k)$ an unshielded triple, and $i, j$ are not d-separated given $S$.
\end{definition}
\citet{kalisch} assume that $\lambda = \Omega(n^{-w})$ for $w \in (0, b/2)$ where $b \in (0, 1]$ relates to the scaling of $d_{max}$. In the low-dimensional setting, it has been shown that the PC-Algorithm achieves uniform consistency with $\lambda$ converging to zero at rate $n^{1/2}$, which then gives the same condition as ordinary faithfulness \citep{zhang}. However, in the high-dimensional setting, the set of distributions which are not restricted strong faithful has nonzero measure. In fact, \citet{uhler} showed that this assumption is overly restrictive and that the measure of unfaithful distributions converges to 1 exponentially in $p$. We will revisit the faithfulness assumption in Section~\ref{sec:faithfullness}.

To address the limitations of the PC-Algorithm, we next propose a new algorithm that takes advantage of the structure of large networks from common random graph families. By doing so, we obtain improved computational and sample complexity; as we will show, these complexities are unaffected by the increase in the maximal degree as the graph becomes larger. We also prove consistency under a weaker faithfulness assumption than that needed for the PC-Algorithm. 

\section{The Reduced PC-Algorithm}\label{sec:rpct}

As with the PC-Algorithm, our strategy for estimating the graph skeleton is to start with a complete graph, and then delete edges by discovering separating sets. Under a faithfulness assumption on a linear structural equation model, we do so by computing partial correlations and declaring $X_i$ and $X_j$ d-separated by $S$ if $\rho(X_i, X_j \mid S)$ is smaller than some threshold $\alpha$. Aside from thresholding, the key difference between our proposal and the PC-Algorithm is that we only consider partial correlations conditional on sets $S$ with small cardinality. 

We justify our method using two key observations. Our first key observation is based on the decomposition of covariances over \emph{treks}, which are special types of paths in directed graphs. 

\begin{definition}
A trek between two nodes $i$ and $j$ in a DAG $G$ is either a path from $i$ to $j$, a path from $j$ to $i$, or a pair of paths from a third node $k$ to $i$ and $j$ such that the two paths only have $k$ in common. 
\end{definition}

In a linear SEM \eqref{sem}, the covariance between two random variables is characterized by the treks between them. Denoting a trek between nodes $i$ and $j$ as $\pi: i \leftrightarrow j$ with common node or source $k$, the covariance is given by 
\begin{equation}
\text{cov}(X_i, X_j) = \sum_{\pi: i \leftrightarrow j} \sigma_k \prod_{e \in \pi} \rho_{e}, 
\label{treksum}
\end{equation} 
where $\sigma_k$ is the variance of $\epsilon_k$ from Equation~\eqref{sem} and $\rho_e$ denotes the weight of an edge along the trek, which is the corresponding coefficient in the SEM. This is shown in full detail by \citet{trek}, who also show that the covariance conditional on a d-separating set $S$ leaves out treks which include any nodes in $S$. This conditioning effect is illustrated in Figure~\ref{fig:treks}, where conditioning on an appropriate set $S$ blocks the treks between non-adjacent nodes, without resulting in any additional d-connecting paths. If all edge weights are bounded by 1 in absolute value, then the contribution of each trek to the covariance decays exponentially in trek length. In practice, we scale the data matrix $X$ so that each column has unit standard deviation. Under some conditions, most of the edge weights in the linear SEM \eqref{sem} then satisfy $|\rho_{ij}| < 1$ (this is discussed and shown empirically in Appendix B). Hence, the contribution of long treks to the conditional covariance among non-adjacent nodes is negligible. This decay motivates the thresholding of partial correlations in rPC, and is further discussed in Section~\ref{sec:asymp}.

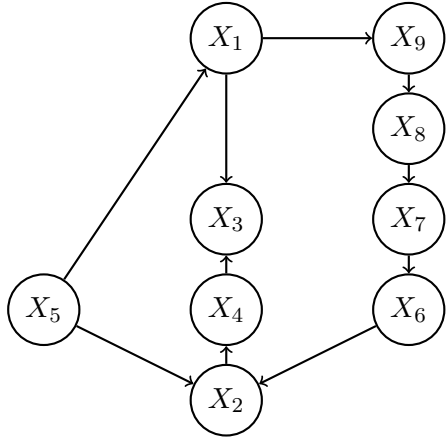
\begin{figure}
\centering
\begin{tikzpicture}[scale=1.2]
\tikzstyle{every node}=[draw,shape=circle];
\tikzstyle{every path}=[thick];
\node (X1) at (0,2) {$X_1$};
\node (X2) at (0,-2) {$X_2$};
\node (X3) at (0,0) {$X_3$};
\node (X4) at (0,-1) {$X_4$};
\node (X5) at (-2,-1) {$X_5$};
\node (X6) at (2,-1) {$X_6$};
\node (X7) at (2,0) {$X_7$};
\node (X8) at (2,1) {$X_8$};
\node (X9) at (2,2) {$X_9$};
\draw[->] (X1) -- (X3);
\draw[->] (X2) -- (X4);
\draw[->] (X4) -- (X3);
\draw[->] (X5) -- (X1);
\draw[->] (X5) -- (X2);
\draw[->] (X1) -- (X9);
\draw[->] (X9) -- (X8);
\draw[->] (X8) -- (X7);
\draw[->] (X7) -- (X6);
\draw[->] (X6) -- (X2);
\end{tikzpicture}
\caption{Illustration of treks between nodes $X_1$ and $X_2$ within a DAG. The middle path involves a collider, $X_3$, so does not contribute to $\text{cov}(X_1, X_2)$, which is $\rho_{51}\rho_{52} + \rho_{19} \rho_{98} \rho_{87} \rho_{76} \rho_{62} $ The conditional covariance $\text{cov}(X_1, X_2 \mid X_5)$ excludes treks that involve $X_5$, and is thus $\rho_{19} \rho_{98} \rho_{87} \rho_{76} \rho_{62}$. Here, $S = \left\{X_5, X_7 \right\}$ is a d-separating set, as it blocks both treks, giving $\text{cov}(X_1, X_2 \mid S) = 0$.}
\label{fig:treks}
\end{figure}

The above observation suggests a new strategy for learning DAG structures by only considering short treks. Suppose $S$ is the set that blocks all short treks between two nodes $j$ and $k$. If the correlation over all remaining d-connecting paths (after conditioning on $S$) between $j$ and $k$ is negligible, then the partial correlation given $S$, $\rho(X_j, X_k \mid S)$ can be used to determine whether $j$ and $k$ are adjacent.

To determine the size of the conditioning set, $S$, we need to determine the number of short treks between any two nodes $j$ and $k$. Our second key observation addresses this question, by utilizing properties of large random graphs. More specifically, motivated by \citeauthor{anandkumar}'s proposal for estimating undirected graphs, we consider a key feature of large random networks, known as the the \emph{local separation property}.
\begin{definition}
Given a graph $G$, a $\gamma$-local separator $S_\gamma(i,j) \subset V$ between non-neighbours $i$ and $j$ minimally separates $i$ and $j$ with respect to paths of length at most $\gamma$. 
\end{definition}

\begin{definition}
A family of graphs $\mathcal{G}$ satisfies the $(\eta,\gamma)$-local separation property if, as $p \rightarrow \infty$, $Pr(\exists G \in \mathcal{G} : \exists (i,j) \not \in E_G, |S_\gamma(i,j)| > \eta ) \rightarrow 0$.
\end{definition}

Intuitively, under $(\eta,\gamma)$-local separation, with high probability, the number of short treks---of length at most $\gamma$---between any two non-neighbouring nodes is bounded above by $\eta$. In fact, as the local separation property refers to any type of path, there are likely even fewer than $\eta$ short treks between any two neighbouring nodes. Therefore, we only need to consider conditioning on sets $S$ of size at most $\eta$ in order to remove the correlation induced by short treks. Combining this with our first insight, we ignore treks (and other possible d-connecting paths) of length longer than $\gamma$, which, for appropriate probability distributions, have a negligible impact on partial correlations. The resulting procedure, called the full reduced PC-Algorithm (rPC-full), is presented in Algorithm~\ref{al1}. 

\begin{algorithm}[b!]
\caption{The full reduced PC-Algorithm (rPC-full)} \label{al1} 
INPUT: Observations from random variables $X_1, X_2, \dots, X_p$; threshold level $\alpha$; maximum separating set size $\eta$. \\
OUTPUT: Estimated skeleton $C$. \\
\begin{tabbing}
   \enspace Set $V = \left\{1, \dots, p \right\}$. \\
   \enspace Form the complete undirected graph $\tilde{C}$ on the vertex set $V$.\\
   \enspace Set $l = -1$; $C = \tilde{C}$. \\
   \enspace \textbf{repeat} \\
   \qquad $l = l+1$ \\
   \qquad \textbf{repeat} \\
   \qquad \qquad Select a (new) ordered pair of nodes $i, j$ that are adjacent in $C$ \\
   \qquad \qquad \textbf{repeat} \\
   \qquad \qquad \qquad Choose (new) $S \subset V \setminus \left\{i,j\right\}$  with $|S| = l$ \\
   \qquad \qquad \qquad \textbf{if} $\rho(X_i, X_j \mid S) \le \alpha$ \\
   \qquad \qquad \qquad \qquad Delete edge $(i,j)$ \\
   \qquad \qquad \qquad \qquad Denote this new graph by $C$ \\
   \qquad \qquad \qquad \textbf{end if} \\
   \qquad \qquad \textbf{until} edge $(i,j)$ deleted or all $S \subset V \setminus \left\{i,j\right\}$ with $|S| = l$ have been chosen \\
   \qquad \textbf{until} all ordered pairs of adjacent nodes $i$ and $j$ have been examined for $\rho(X_i, X_j \mid S) \le \alpha$ \\
   \enspace \textbf{until} $l > \eta$
\end{tabbing}
\end{algorithm}

We note a key implementation difference between rPC-full and the ordinary PC-Algorithm: when searching for a separating set, rPC-full considers all $S \subset V \setminus \left\{i,j\right\}$, while PC-Algorithm only searches over the local neighbourhoods of $i$ and $j$. Recall that if a set $S$ d-separates nodes $i$ and $j$, then there exists a d-separating set $S'$ such that all nodes in $S'$ are adjacent to $i$ or $j$. However, $|S'|$ may be larger than $|S|$; because rPC only considers sets of size up to $\eta$, a full search is needed to ensure discovery of a d-separating set. Motivated by the PC-Algorithm, we also propose an approximate reduced PC-Algorithm (rPC-approx), which uses the same local neighbourhood search, i.e. $S \subset adj(i) \cup adj(j) \setminus \left\{i,j\right\}$.  In practice, we show that rPC-approx performs almost identically to rPC-full.

To recap, our proposal in Algorithm~\ref{al1} hinges on two important properties of probability models on large DAGs: (P1) boundedness of the number of short treks between any two nodes, and (P2) negligibility of correlation over the remaining (long) d-connecting paths. The first property, which is characterized by local separation, concerns solely the DAG structure. \citet{anandkumar} show that many common graph families satisfy the local separation property with small $\eta$. Specifically, sparse, large binary trees, Erd\H{o}s-R{\'e}nyi graphs, and graphs with power law degree distributions all satisfy this property with $\eta \le 2$. Moreover, the sparsity requirement for these graph families is in terms of average node degree, and not the maximum node degree.  For these graph families, our algorithm only needs to consider separating sets of size 0, 1, and 2, irrespective of the maximum node degree. Small-world graphs, as generated by the Watts-Strogatz algorithm \citep{kleinberg2000small}, also satisfy this property, but with $\eta > 2$. In addition, the $\gamma$ parameter increases with $p$ for these families; thus, as graphs get larger, the local separation property applies to a larger set of paths. 

By only considering a bounded number of short paths, our algorithm has computational complexity $O(p^{\eta + 2})$, and thus avoids the exponential scaling in $d_{max}$ that the PC-Algorithm suffers from. This is particularly significant in the case of power-law graphs, where $d_{max} = O(p^a)$ for $a > 0$ \citep{MolloyReed1995}; in this case, PC-Algorithm has a computational complexity of $O(p^{p^{a}})$, which is significantly worse than rPC's complexity of $O(p^{4})$. While rPC-full might not be faster in practice due to the larger search space, rPC-approx would result in significant speedup. 

Unlike the first property (P1), the second property needed for our algorithm, namely the negligibility of correlation over d-connecting paths, concerns both the structure of the DAG $G$, and the probability distribution $P$ of variables on the graph. In the next section, we discuss two alternative sufficient conditions that guarantee this property, and allow us to consistently estimate the DAG skeleton. 

\section{Algorithm Analysis and Asymptotics}\label{sec:asymp}

In this section, we describe in detail the assumptions required for the rPC-full algorithm to consistently recover the DAG skeleton. We also discuss its computational and statistical properties, particularly in comparison with the PC-Algorithm.   

\subsection{Consistency}

As discussed in the previous section, to consistently recover the DAG skeleton, rPC requires that properties (P1) and (P2) hold; namely, that the graph under consideration has a bounded number of short paths and the correlation over d-connecting paths of length greater than $\gamma$ decays sufficiently quickly. In fact, the trek decomposition \eqref{treksum} indicates that the correlation for each long trek is small when most of the edge weights are bounded by 1 in absolute value. However, condition (P2) requires the total correlation over all long treks (and other d-connecting paths) to decay sufficiently quickly. To this end, we consider two alternative sufficient conditions. The first condition is a direct assumption on the boundedness of the conditional correlation. The second is inspired by \citet{anandkumar}, and assumes the underlying probability model satisfies what we term \emph{directed walk-summability}. This condition mirrors the walk-summability condition for undirected graphical models, which has been well-studied and shown to hold in a large class of models \citep{walksums}. 

\begin{definition}
A probability model is directed $\beta$-walk-summable on a DAG with weighted adjacency matrix $A$, if $\| A \| \le \beta < 1$ where $\| \cdot \|$ denotes the spectral norm.
\end{definition}

As an alternative to this condition, we also present a direct assumption, Assumption~\ref{assum::bddtrek}. This condition is less restrictive than directed walk-summability, but has not been characterized in the literature. However, given that $\gamma$ increases with $p$ in the graph families we are considering, it is intuitive that the sum of edge weight products over treks longer than length $\gamma$ will be decreasing and asymptotically small. This also holds for non-trek d-connecting paths. These two assumptions lead to two parallel proofs of the consistency of our algorithm, presented in Theorem~\ref{thm::consistency}. Before stating the theorem, we discuss our assumptions. 

Similar to the PC-Algorithm, our method requires a faithfulness condition. As stated previously, our condition, which we term \emph{path faithfulness} and is defined next, is weaker than PC-Algorithm's $\lambda$-strong faithfulness stated in Definition~\ref{def::PCfaithfulness} (see Section~\ref{sec:faithfullness} for additional details). 
\begin{definition}\label{def::ourfaithfulness}
Given $\lambda \in (0,1)$, a distribution $P$ is said to be $\lambda$-path-faithful to a DAG $G = (V,E)$ if both of the following conditions hold: \\
(i) $\min \left\{ |\rho(X_i, X_j \mid X_S) | : (i,j) \in E, S \subset V \setminus \left\{ i,j \right\}, |S| \le \eta \right\} > \lambda$, for some $\eta$, and \\
(ii) $\min \left\{ |\rho(X_i, X_j \mid X_S) | : (i,j,S) \in N_G \right\} > \lambda$, where $N_G$ is the set of
triples $(i, j,S)$ such that $i, j$ are not adjacent, but there exists $k \in V$ making $(i, j, k)$ an unshielded triple, and $i, j$ are not d-separated given $S$.
\end{definition}
Part (i) of the assumption only requires partial correlations between true edges conditioned on sets of size up to $\eta$ to be bounded away from zero, while the PC-Algorithm requires this for conditioning sets of size up to $d_{max}$. In Section~\ref{sec:faithfullness}, we discuss how this affects bounds on the true partial correlations, and also empirically show that the above path faithfulness assumption is less restrictive than corresponding assumption for the PC-Algorithm. 

\begin{assumption}[Path faithfulness and Markov property]\label{assum::faith}
The probability distribution $P$ of random variables corresponds to a linear SEM \eqref{sem} with sub-Gaussian errors, and is $\lambda$-path-faithful to the DAG $G$, with $\lambda = \Omega(n^{-c})$ for $c \in (0, 1/2)$.
\end{assumption}

Our second assumption ensures that the covariance matrix of the structural equation model and its inverse remain bounded as $p$ grows. 

\begin{assumption}[Covariance and precision matrix boundedness]\label{assum::cov}
The covariance matrix of the model $\Sigma_G$ and its inverse $\Sigma_G^{-1}$ are bounded in spectral norm, that is, $\max(\| \Sigma_G \|, \| \Sigma_G^{-1} \|) \le M < \infty$ for all $p$. 
\end{assumption}

The last three assumptions characterize applicable graph families and probability distributions. 

\begin{assumption}[$(\eta,\gamma)$-local separation]\label{assum::locsep}
The DAG $G$ belongs to a family of random graphs $\mathcal{G}$ that satisfies the $(\eta,\gamma)$-local separation property with $\eta = O(1)$ and $\gamma = O( \log p)$. 
\end{assumption}

\begin{assumption}[Bounded long path weight]\label{assum::bddtrek}
Let $\pi$ denote a d-connecting path of length $l(\pi)$ between two non-adjacent vertices $i$ and $j$ in $G$. Then, there exists a conditioning set $S$ such that the total edge weight over d-connecting paths longer than $\gamma$ satisfies:
\begin{equation*}
\sum_{ l = \gamma + 1}^{p-1} \sum_{l(\pi) = l} | \rho_{\pi, 1} \dots \rho_{\pi, l} | = O(\beta^\gamma), 
\end{equation*}
for some $\beta \in (0,1)$.
\end{assumption}

Assumption~\ref{assum::bddtrek} guarantees that the sum of weights over long treks between any two nodes $i$ and $j$ is bounded. For a single trek, a sufficient condition is that all edge weights are bounded by 1 in magnitude. In Appendix~B, we provide further discussion on Assumption~\ref{assum::bddtrek}, and empirically investigate its plausibility. In particular, we show that if the data matrix $X$ is scaled so that each column has unit standard deviation, then with high probability all edge weights are bounded by 1 in absolute value. To account for residual correlation induced through conditioning on common descendants of $i$ and $j$, Assumption~\ref{assum::bddtrek} also includes non-trek d-connecting paths. 

\begin{assumption}[Directed $\beta$-walk-summability]\label{assum::walksum}
The probability distribution $P$ is directed $\beta$-walk-summable. 
\end{assumption}

We are now ready to state our main result. The result is proved in Appendix A, where the error probabilities are also analyzed. 

\begin{theorem}\label{thm::consistency}
Under Assumptions~\ref{assum::faith}-\ref{assum::locsep} and either Assumption~\ref{assum::bddtrek} or \ref{assum::walksum}, there exists a parameter $\alpha$ for thresholding partial correlations such that, as $n,p \longrightarrow \infty$ with $n = \Omega \{ (\log p)^{1/(1 - 2c)} \} $, the full reduced PC (rPC-full) procedure, as described in Algorithm~\ref{al1}, consistently learns the skeleton of the DAG $G$.
\end{theorem}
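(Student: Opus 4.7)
The plan is to decompose the argument into (a) a population-level separation showing that, with a carefully chosen threshold $\alpha$, the population partial correlations correctly distinguish edges from non-edges when the conditioning sets have size at most $\eta$, and (b) a sample-level concentration argument that transfers this separation to the estimated partial correlations used by rPC, together with a union bound over the $O(p^{\eta+2})$ conditional-independence tests the algorithm performs.

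For the population step, fix a non-adjacent pair $(i,j)$. Assumption~\ref{assum::locsep} supplies, with probability tending to one over the random graph, a set $S$ with $|S| \le \eta$ that separates $i$ from $j$ with respect to every path of length at most $\gamma = \Theta(\log p)$, and in particular every short trek. By the trek decomposition~\eqref{treksum}, the residual $\mathrm{cov}(X_i, X_j \mid S)$ is then a sum over treks of length strictly greater than $\gamma$. Under Assumption~\ref{assum::bddtrek} this sum is $O(\beta^\gamma) = O(p^{-K})$ for some $K > 0$; under Assumption~\ref{assum::walksum} the identity $\Sigma_G = (I - A)^{-\top} D (I - A)^{-1}$ combined with $\|A\| \le \beta < 1$ yields the same geometric decay through the Neumann expansion $(I - A)^{-1} = \sum_{k \ge 0} A^k$, since the spectral-norm contribution of $\sum_{k > \gamma} A^k$ is at most $\beta^\gamma/(1-\beta)$. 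Assumption~\ref{assum::cov} then converts these conditional covariance bounds into bounds on conditional correlations without losing the rate. Combined with Assumption~\ref{assum::faith}(i), which lower-bounds partial correlations on true edges by $\lambda$, any threshold $\alpha$ satisfying $p^{-K} \ll \alpha < \lambda/2$ gives an error-free decision rule at the population level.

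The sample step uses standard concentration for sample partial correlations. Sub-Gaussianity in Assumption~\ref{assum::faith} and the bounded spectra in Assumption~\ref{assum::cov} yield a uniform Fisher-$z$ type bound
\begin{equation*}
\Pr\bigl(\,|\hat\rho(X_i,X_j \mid S) - \rho(X_i,X_j \mid S)| > \tau\,\bigr) \le C_1 \exp(-C_2 n \tau^2),
\end{equation*}
valid over all triples $(i,j,S)$ with $|S| \le \eta$. Taking $\tau = \lambda/4$ and applying a union bound over the at most $O(p^{\eta+2})$ triples tested by rPC, the total failure probability is at most $p^{\eta+2} \exp(-C_2 n \lambda^2)$, which tends to zero provided $n \lambda^2 / \log p \to \infty$. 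Since $\lambda = \Omega(n^{-c})$, this forces $n^{1-2c} \gg \log p$, i.e. $n = \Omega\{(\log p)^{1/(1-2c)}\}$, matching the stated rate.

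Combining the two steps with an induction on the outer loop index $l$ of Algorithm~\ref{al1} closes the argument: at every iteration the current graph $C$ is, with high probability, a supergraph of the true skeleton, because true edges are preserved by Assumption~\ref{assum::faith}(i) together with the concentration bound, while any true non-edge is removed by the time $l$ reaches the size of its local separator, whose elements remain in $adj(i) \cup adj(j)$ of the supergraph being pruned. The main obstacle is the long-trek bound under Assumption~\ref{assum::walksum}: one must control the conditional covariance after partialling out $S$, not the marginal one, and this requires expressing the Schur complement of $\Sigma_G$ on $\{i,j\}$ given $S$ as a walk sum on $G$ in which the short-trek terms cancel by local separation, leaving only a $\beta^\gamma$ geometric tail. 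The boundedness of $\Sigma_G$ and $\Sigma_G^{-1}$ is essential here, both to perform this decomposition cleanly and to interchange conditional covariances with conditional correlations throughout the argument.
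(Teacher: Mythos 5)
Your proposal follows essentially the same route as the paper's own proof: a population-level bound of $O(\beta^\gamma)$ on non-edge partial correlations (obtained via the trek decomposition under Assumption~\ref{assum::bddtrek}, and via the Neumann expansion plus a Schur-complement argument in which local separation annihilates the short-trek terms under Assumption~\ref{assum::walksum}), combined with sub-Gaussian concentration of sample partial correlations, a union bound over the $O(p^{\eta+2})$ tests, and a threshold $\alpha$ sandwiched between $\beta^\gamma$ and $\lambda$, yielding exactly the stated rate $n = \Omega\{(\log p)^{1/(1-2c)}\}$. The differences are matters of execution only: the paper carries out your flagged Schur-complement step via a Woodbury-identity bound on $\|\Sigma_G^{-1} - \Sigma_H^{-1}\|$ together with the observation that the short-trek covariance $\Sigma_H$ satisfies $\Sigma_H^{-1}(A,A)_{1,2} = 0$ on $A = \{i,j\} \cup S$, and your explicit induction on the loop index $l$ (ensuring local separators remain inside the pruned adjacency sets) makes precise a point the paper leaves implicit.
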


Several theoretical features of our algorithm are attractive. As stated previously, our faithfulness condition is weaker than the corresponding assumption for the PC-Algorithm and related methods. Similar to its computational complexity, the sample complexity of our algorithm also does not scale with the maximal node degree, and is only dependent on the parameter $\eta$ as $p$ increases. For example, in a power law graph, the sample complexity of the PC-Algorithm is $\Omega\{ \max(\log p, p^{ab}) \}$ for $0 < a, b < 1$, compared to $\Omega \{ (\log p)^{1/1 - 2c} \}$ with $c \in (0, 1/2)$ for our method. This gain in efficiency is due to fact that the maximum separating set size, $\eta$, remains constant in rPC. Finally, our algorithm does not require the data to be jointly Gaussian. The proof of the algorithm's consistency only requires that the population covariance matrix can be well-approximated from the data; for simplicity, we assume a sub-Gaussian distribution.

\subsection{On Faithfulness Assumption}
\label{sec:faithfullness}

As stated in Section~\ref{sec:pc}, for large biological networks of interest, the maximum node degree, $d_{max}$, often grows with $p$. Therefore, the $\lambda$-restricted strong faithfulness condition of the PC-Algorithm---Definition~\ref{def::PCfaithfulness}---becomes exponentially harder to satisfy with increasing network size. A full discussion of this phenomenon can be found in \citet{uhler}, where it is shown that the measure of strong unfaithful distributions converges to 1 for various graph structures. Although this would also occur with path faithfulness (Definition~\ref{def::ourfaithfulness}), our condition allows a rate for $\lambda$ that is independent of $d_{max}$ and $p$. 

The rate for $\lambda$ in the PC-Algorithm is $\lambda = \Omega(n^{-w})$ for $w \in (0, b/2)$, where $d_{max} = O(n^{1-b}) \mbox{ for } b \in (0, 1]$. For $b = 1$, or constant $d_{max}$, the PC-Algorithm's required scaling for $\lambda$ is identical to that for our method in Assumption~\ref{assum::faith}. This makes sense intuitively, since our method is not affected by the increase in $d_{max}$. For other values of $b$, the scaling of $\lambda$ becomes more restricted for the PC-Algorithm; for example, if $b = 1/2$, then $\lambda = \Omega(n^{-w})$ for $w \in (0, 1/4)$. However, under path faithfulness (Definition~\ref{def::ourfaithfulness}), we can still achieve a rate of $\lambda = \Omega(n^{-1/2})$; that is, the partial correlations are allowed to be smaller and the condition is weaker.

We report the findings of a simulation study, similar to that in \citet{uhler}, which examines how often randomly generated DAGs satisfy part (i) of the path faithfulness assumption compared to restricted strong faithfulness. We are primarily interested in part (i), as this part is needed for consistent skeleton estimation; part (ii), on the other hand, is needed to obtain correct separating sets in order to obtain partial orientation of edges. In this simulation, 1000 random DAGs were generated from Erd\H{o}s-R{\'e}nyi and power law families, with edge weights drawn independently from a $\text{Uniform}(-1, 1)$ distribution. Each DAG had $p = 20$ nodes, with varying expected degrees per node. For each simulation setting, we computed the proportion of DAGs that satisfied part (i) of the $\lambda$-restricted-strong-faithfulness and $\lambda$-path-faithfulness conditions with $\lambda = 0.001$ and $\eta = 2$. The results are shown in Table~\ref{tab:faithsim}. We see that path faithfulness is much more likely to be satisfied than restricted strong faithfulness, especially for power law graphs. This is to be expected, as the number of constraints required for restricted strong faithfulness grows with $d_{max}$, but remains constant for path faithfulness. It is, however, difficult for dense graphs to satisfy either condition, although there is a mild advantage for path faithfulness. In Appendix B, we provide the results of further simulation studies with $p = 10$ and $p = 30$; both of these give similar conclusions and indicate that path faithfulness remains easier to satisfy with increasing network size. 

\begin{table}
\caption{\label{tab:faithsim} Empirical probabilities of random DAGs of size $p = 20$ satisfying faithfulness conditions; RSF refers to restricted strong faithfulness of the PC-Algorithm, and PF refers to path faithfulness of reduced PC (rPC).}
\centering
\begin{tabular}{cccc}
Graph family & Expected degree & $Pr(\text{RSF})$ & $Pr(\text{PF})$ \\
\hline
Erd\H{o}s-R{\'e}nyi           & 2               & 0.77     & 0.92    \\
Erd\H{o}s-R{\'e}nyi           & 5               & 0      & 0.08     \\
Power law           & 2               & 0.54     & 0.85    \\
Power law           & 6               & 0.003      & 0.08   
\end{tabular}
\end{table}

\subsection{Tuning Parameter Selection}
\label{sec:tuning}

Our algorithm requires two tuning parameters: the maximum separating set size $\eta$, and the threshold level for partial correlations $\alpha$. The parameter $\eta$ varies based on the underlying graph family. Thus, given knowledge of a plausible graph structure, $\eta$ can be pre-specified. Alternatively, $\eta$ can be selected by maximizing a goodness-of-fit score over a parameter grid, along with $\alpha$. This may be preferable as the local separation results consider all short paths, not just treks, so better performance may be obtained by specifying a smaller $\eta$. Likewise, when using the rPC-approx algorithm, a larger $\eta$ could be needed to discover appropriate separating sets. 

For jointly Gaussian data, we can obtain a modified version of the Bayesian information criterion by fitting the likelihood to the CPDAG obtained based on the estimated DAG skeleton \citep{foygel}. Following \citet{anandkumar}, and denoting by $X_{obs}$ the observed data, we use: 
\begin{equation} \label{eqn::bic}
\textsc{bic}(X_{obs}; \hat{G}) = \log f(X_{obs}; \hat{\theta}) - 0.5|E|\log(n) - 2|E|\log(p), 
\end{equation}
where $\hat{G}$ denotes one of the DAGs obtained from the estimated CPDAG containing $|E|$ edges. The CPDAG represents the Markov equivalence class of DAGs, so all possible graphs will result in the same fitted Gaussian model with parameters $\hat{\theta}$. We use this \textsc{bic} for tuning parameter selection; higher scores imply a better fit. For linear SEMs with non-Gaussian noise distributions, the Gaussian likelihood serves as a surrogate goodness-of-fit measure, and  \textsc{bic} can still be used to select the tuning parameters.

\section{Simulation Studies}
\label{sec:sims}

In this section, we compare the performance of rPC-approx, rPC-full, and the standard PC-Algorithm in multiple simulation settings. We consider both setting a constant value for $\eta$ in rPC, and tuning it to maximize the \textsc{bic}. 

\subsection{Pre-specified $\eta$ Parameter}

To facilitate the comparison with the PC-Algorithm, we generate data from Gaussian linear SEMs as in Equation~\ref{sem}, with the dependency structure specified by a DAG from Erd\H{o}s-R{\'e}nyi and power law families. We implement our algorithm with maximum separating set size $\eta = 2$ since these families are known to satisfy $(\eta,\gamma)$-local-separation with $\eta \le 2$ \citep{anandkumar}. 

We generate a random graph with $p$ nodes using the \texttt{igraph} library in R, assigning every edge a weight from a $\text{Uniform}(0.1,1)$ distribution. We then use the \texttt{rmvDAG} function from the \texttt{pcalg} library to simulate $n$ observations from the DAG. This is repeated 20 times for each  thresholding level $\alpha$; average true and false positive rates for both algorithms are reported over the grid of $\alpha$ values. 
Our grid of $\alpha$ values produces partial receiver operating characteristic (pROC) curves for varying sample sizes and graph structures, which are used to assess the estimation accuracy of the methods. 

For both Erd\H{o}s-R{\'e}nyi and power law DAGs, we consider a low-dimensional setting with $p = 100$ nodes and $n = 200$ observations, and two high-dimensional settings with $(p, n) = (200, 100)$ and $(p, n) = (500, 200)$. In all settings, the DAGs are set to have an average degree of 2. The maximum degrees of the Erd\H{o}s-R{\'e}nyi graphs range from 5 to 7. The maximum degrees for power law graphs increase with $p$ and are 42, 69, and 71 for the three simulation settings. 

Results for Erd\H{o}s-R{\'e}nyi DAGs are shown in Figure~\ref{fig:ER}. In this setting, all algorithms perform almost identically in both low and high-dimensional cases. This is because for the relatively small maximum degree in Erd\H{o}s-R{\'e}nyi graphs, the sizes of conditioning sets for PC and rPC are not very different. 

\begin{figure}
\centering
\includegraphics[height=0.24\textheight]{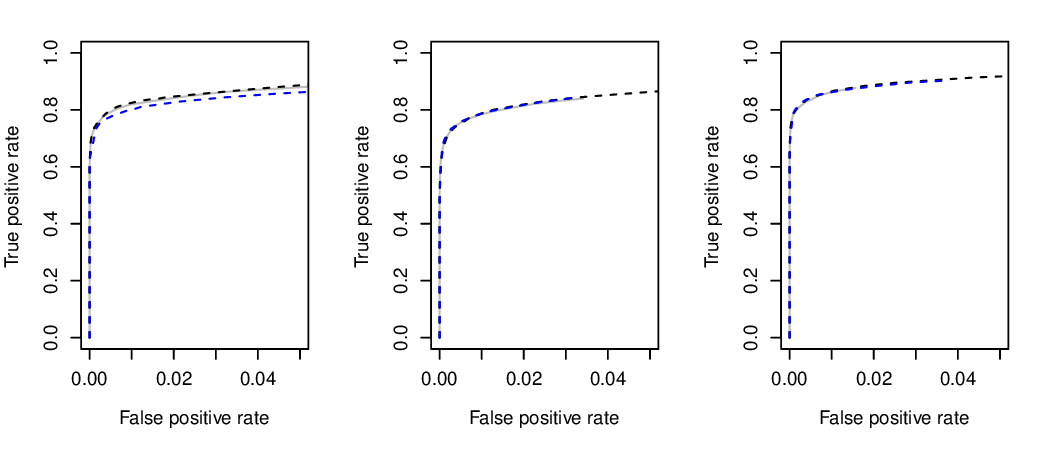}
\caption{Average true vs. false positive rates for PC-Algorithm (grey solid line), rPC-approx (black dashed line), and rPC-full (blue dashed line) estimating Erd\H{o}s-R{\'e}nyi DAGs. Left: $p=100, n = 200$; centre: $p = 200, n = 100$; right: $p = 500, n = 200$.}
\label{fig:ER}
\end{figure}  
\begin{figure}
\centering
\includegraphics[height=0.24\textheight]{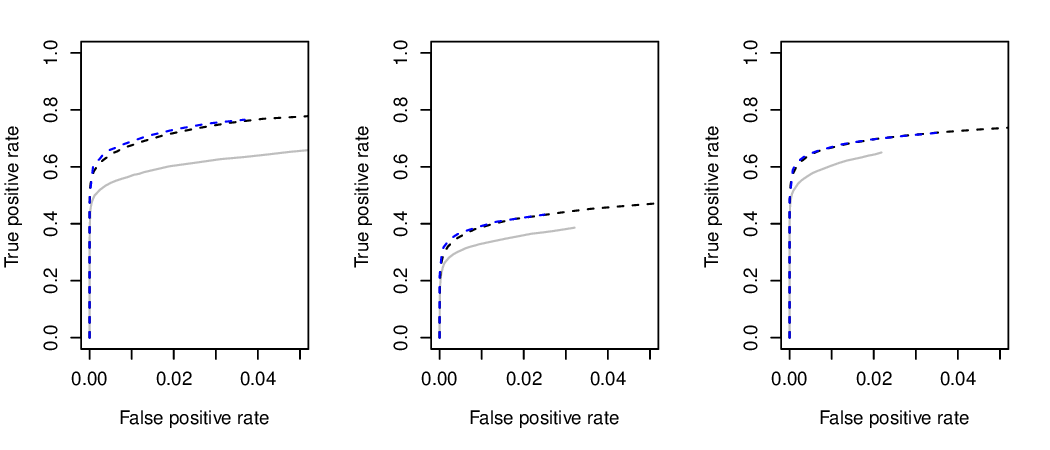}
\caption{Average true vs. false positive rates for PC-Algorithm (grey solid line), rPC-approx (black dashed line), and rPC-full (blue dashed line) estimating power law DAGs. Left: $p=100, n = 200$; centre: $p = 200, n = 100$; right: $p = 500, n = 200$.}
\label{fig:PL}
\end{figure}  
\begin{figure}
\centering
\includegraphics[height=0.24\textheight]{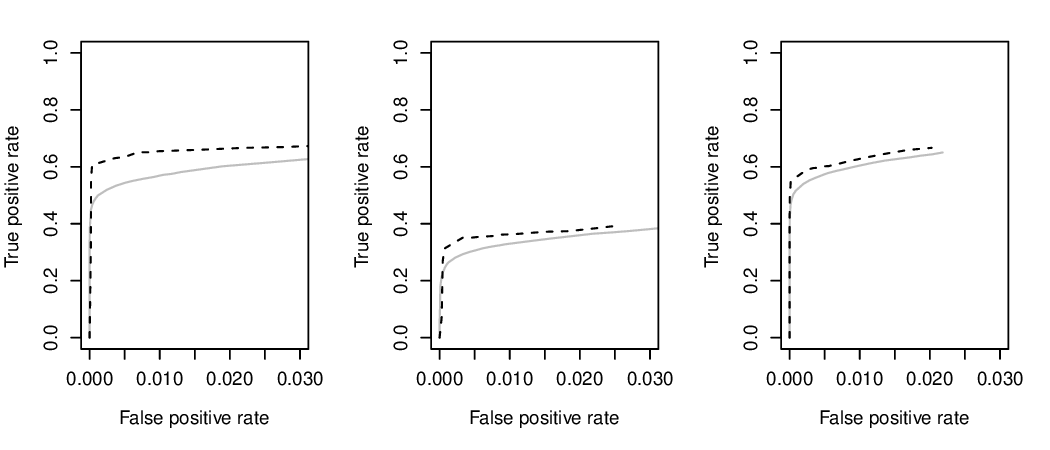}
\caption{Average true vs. false positive rates for PC-Algorithm (grey solid line) and rPC-approx with \textsc{bic}-tuned $\eta$ (black dashed line) estimating power law DAGs. Left: $p=100, n = 200$; centre: $p = 200, n = 100$; right: $p = 500, n = 200$.}
\label{fig:bic}
\end{figure}  

Results for power law DAGs are shown in Figure~\ref{fig:PL}. In this case, both versions of rPC offer improved accuracy compared to the PC-Algorithm, in both low and high-dimensional settings. These results confirm our theoretical findings, and show that our algorithm performs better at estimating DAGs with hub nodes than the PC-Algorithm. 

Additional simulations in Appendix B show similar results in more dense DAGs. As the underlying DAG becomes more dense, all methods perform worse; however, our algorithms maintain an advantage over the PC-Algorithm in the power law setting. All of these simulations also confirm that rPC-full and rPC-approx perform very similarly. This suggests that the situation where conditioning on non-local sets is required occurs with very low probability, indicating that rPC-approx provides suitably good estimates. 

We also compare the runtimes of the algorithms for these settings. Over 100 iterations, we generate a random dataset, and apply all the algorithms with a range of tuning parameters. Specifically, we set $\eta = 2$ for rPC, and $\alpha = \left\{10^{-6}, 10^{-5}, 10^{-4}, 10^{-3}, 10^{-2}\right\}$. We then take the total runtime over all parameters, and compare by considering the mean value of $100 \left( \frac{time_{rPC}}{time_{PC}} \right)$ for both rPC-approx and rPC-full. The results are shown in Table~\ref{tab:time}. We observe that rPC-approx is significantly faster than the PC-Algorithm for power law graphs. As expected, our implementation of rPC-full is slower than the PC-Algorithm in all settings, given its exhaustive search over all possible separating sets. Given the results indicating that rPC-approx performs almost identically to rPC-full, we suggest that practitioners use rPC-approx, possibly combined with \textsc{bic} tuning if a suitable $\eta$ bound is not known. We present relevant simulation results in the next section.

\begin{table}[ht]
\caption{\label{tab:time} Empirical ratio of rPC-approx and rPC-full runtimes to PC-Algorithm runtime.}
\centering
\begin{tabular}{ccccc}
Graph family & $p$   & $n$   & rPC-approx & rPC-full \\
  \hline
 Erd\H{o}s-R{\'e}nyi & 100 & 200 & 0.98 & 3.08 \\ 
 Erd\H{o}s-R{\'e}nyi & 200 & 100 & 1.00 & 6.16 \\ 
 Erd\H{o}s-R{\'e}nyi & 500 & 200 & 1.00 & 23.25 \\ 
 Power law & 100 & 200 & 0.60 & 1.13 \\ 
 Power law & 200 & 100 & 0.92 & 3.81 \\ 
 Power law & 500 & 200 & 0.36 & 5.23 \\ 
   \hline
\end{tabular}
\end{table}

\subsection{BIC-tuned $\eta$ Parameter}

In this section, we consider simulations where the maximum size of the conditioning set for rPC-approx, $\eta$, is selected to maximize the \textsc{bic} score defined in \eqref{eqn::bic}. To this end, we consider power law DAGs with the same low and high-dimensional settings as before. We select the value of $\eta \in \{1,2,3,4\}$ which maximizes the \textsc{bic} at each $\alpha$ value. While rPC-full could also be tuned in this way, it would be computationally expensive to consider $\eta$ beyond 3 for $p > 200$. The results in Figure~\ref{fig:bic} show that our algorithm maintains an advantage over the PC-Algorithm. Interestingly, for values of $\alpha$ which yielded the best estimation accuracy, the optimal $\eta$ selected was most frequently 1, which confirms our intuition from Section~\ref{sec:rpct} that the $\eta$ parameter for a graph family should be seen as an upper bound for estimating DAGs using our algorithm. 

\section{Application: Estimation of Gene Regulatory Networks}
\label{sec:appl}

We apply our algorithms and the PC-Algorithm to a gene expression data set of $n = 487$ patients with prostate cancer from The Cancer Genome Atlas \citep{tcga}. We select $p = 272$ genes with known network structure from BioGRID \citep{biogrid}, and attempt to recover this network from the data. We choose the tuning parameters for both rPC algorithms and the p-value threshold for the PC-Algorithm by searching over a grid of values and selecting those which yielded the largest \textsc{bic} \eqref{eqn::bic}. We found that the best rPC algorithm in terms of \textsc{bic} was rPC-approx with $\eta = 3$ and $\alpha = 0.09$, while the best p-value threshold for the PC-Algorithm was 0.07. 

The BioGRID database provides valuable information about known gene regulatory interactions. However, this databse mainly capture genetic interactions in normal cells. Thus, the information from BioGRID may not correctly capture interactions in cancerous cells, which are of interest in our application \citep{ideker2012differential}. Despite this limitation, highly connected hub genes in the BioGRID network, which usually correspond to  transcription factors, are expected to stay highly connected in cancer cells. Therefore, to evaluate the performance of the two methods, we focus here on the identification of hub genes, which are often most clinically relevant \citep{hub-imp1, hub-imp2, hub-imp3}. 

The two estimated networks and their hub genes are visualized in Figure~\ref{fig:hubs}. 
Here, we define hub genes as nodes with degree at least 8, which corresponds to the 75th percentile in the degree distribution of both estimates. rPC-approx identifies 19 of 57 true hubs, while the PC-Algorithm only identifies 6. Interestingly, several of the hub genes uniquely identified by rPC are known to be associated with prostate cancer, including ACP1, ARHGEF12, CDH1, EGFR, and PLXNB1 \citep{cancergenes1, cancergenes2, cancergenes3, cancergenes4, cancergenes5}. These results suggest that rPC may be a promising alternative for estimating biological networks, where highly-connected nodes are of clinical importance. 

Examining the two networks also indicates that for nodes with small degrees, the estimated neighborhoods from rPC are very similar to those from the PC-Algorithm. To assess this observation, we consider the induced subgraph of nodes with degree at most 5 in the PC-Algorithm estimate. The $F_1$ score---which is a weighted average of precision and recall---between the two estimates of this sparse subnetwork is 0$\cdot$86. This value indicates that the two algorithms perform very similarly over sparse nodes.

\begin{figure}
\centering
\includegraphics[width=0.95\textwidth]{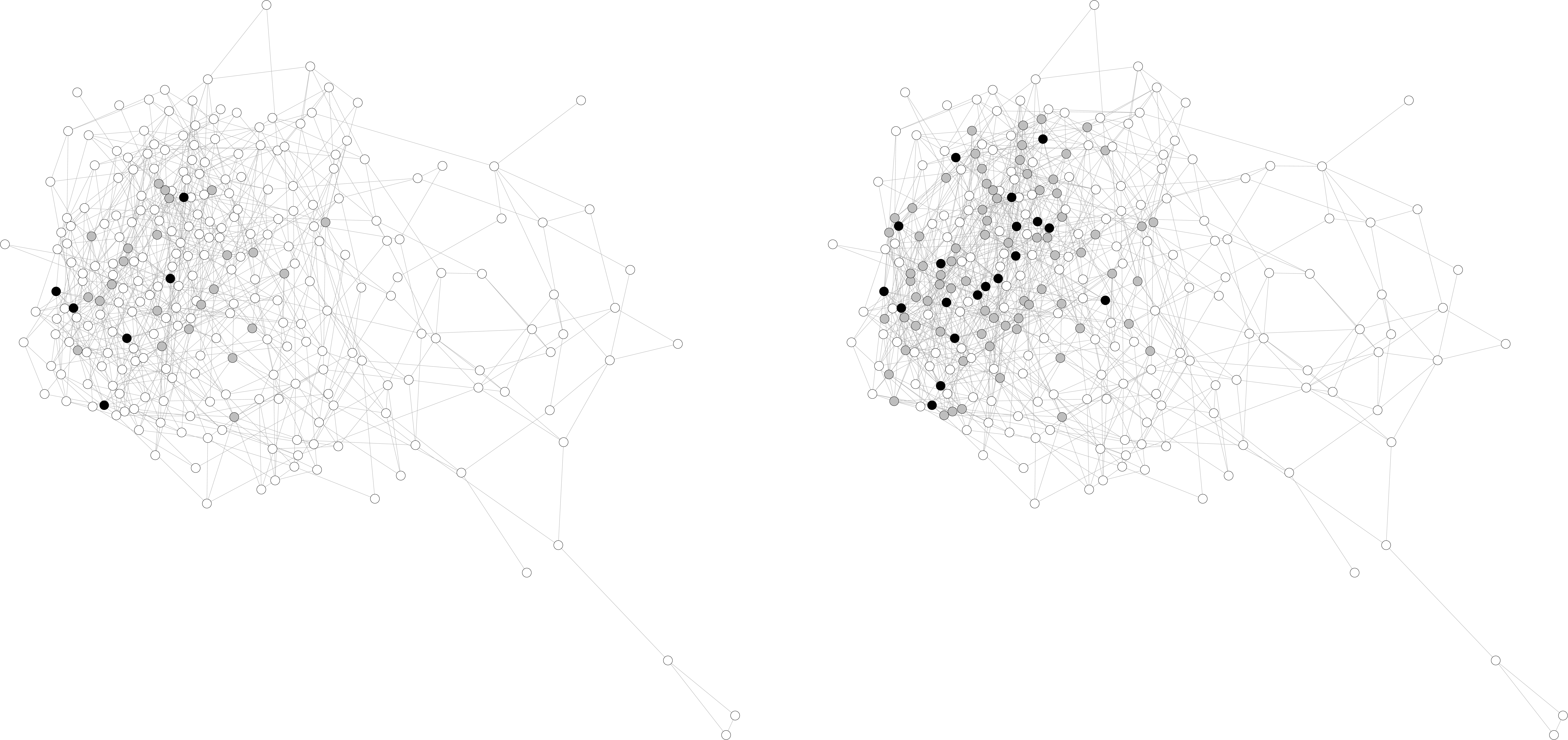}
\caption{Estimated skeletons of gene regulatory networks in prostate cancer subjects. Black nodes are classified as hubs, having estimated degree of at least 8. Grey nodes are identified hubs that are also considered hubs in the BioGRID data. Left: PC-Algorithm; right: rPC-approx.}
\label{fig:hubs}
\end{figure}  

\section{Discussion}\label{sec:disc}
Our new algorithm for learning directed acyclic graphs (DAGs) by conditioning on small sets leads to more efficient computation and estimation under a less restrictive faithfulness assumption than the PC-Algorithm. However, our weaker faithfulness condition may still not be satisfied for dense DAGs or in structural equation models with edge weights distributed over a larger parameter space. This is shown both geometrically and empirically in \citet{uhler}, and remains a direction for future research. Generalizing the idea of restricted conditioning to more complex probability models over DAGs, such as nonlinear SEMs \citep{spacejam} would also be of interest. Finally, the idea of conditioning on small sets of variables can also be used to develop more efficient hybrid methods for learning DAGs in high dimensions.


\newpage

\appendix
\section*{Appendix A. Proof of Theorem~\ref{thm::consistency}}
\label{app:theorem}

In this section, we prove the consistency of rPC-full for estimating the skeleton of a directed acyclic graph under the assumptions stated in the main paper.

We begin by establishing that correlations decay over long paths in the graph, and use this to show that the partial correlation between two non-adjacent nodes, conditional on a suitable set $S$ with small cardinality, is bounded above. We establish this through two possible sufficient conditions: Lemma 1 is based on Assumption~\ref{assum::bddtrek}, which directly assumes that the the total weight over long paths is sufficiently small; on the other hand, Lemma 2 uses Assumption~\ref{assum::walksum}, which assumes the underlying model is directed walk-summable. Combining this result with Assumption~\ref{assum::faith}, which says that the relevant partial correlations between two adjacent nodes is bounded below, we have oracle consistency of rPC-full. In Lemma 3, we invoke a concentration inequality for sample partial correlations to bound their deviations from population quantities. Using these results, we then prove that there exists a threshold level that consistently recovers the true skeleton in the finite sample setting. \\

In our first two lemmas, we make use of the local separation property in Assumption~\ref{assum::locsep}. While Assumption~\ref{assum::locsep} does not directly concern \emph{local separating sets}, since a d-separating set is a subset of a general separating set, Assumption~\ref{assum::locsep} asymptotically guarantees the existence of a \textit{$\gamma$-local d-separator} of size at most $\eta$ for any two non-neighbouring nodes. 

\begin{definition}
Given a graph $G$, a $\gamma$-local d-separator $S_\gamma(i,j) \subset V \setminus \left\{i,j\right\}$ between non-neighbours $i$ and $j$ minimally d-separates $i$ and $j$ over paths of length at most $\gamma$. 
\end{definition}

\begin{lemma}
Under Assumptions 1-4, the partial correlation between non-neighbours $i$ and $j$ satisfies 
\[
\min\limits_{ \substack{S \in S_{\eta, \gamma} } } |\rho(i,j \mid S)| = O(\beta^\gamma).
\] 
where $S_{\eta, \gamma}$ is the set of $\gamma$-local d-separators of size at most $\eta$. 
\end{lemma}

\begin{proof}

Recall the form of the linear structural equation model: 
\begin{equation*}
X_k = \sum_{j \in pa(k)} \rho_{jk} X_j + \epsilon_k,
\end{equation*}
where $\epsilon_k$ are independent and $Var(\epsilon_k) = \sigma_k^2 < \infty$ for all $k$. 
 
Let $A_G$ denote the lower-triangular weighted adjacency matrix for the graph $G$, obtained by ordering the nodes according to a causal order \citep{shojaie}, so that $j \in pa(k)$ implies $j < k$. Then, as shown by \citet{shojaie} for the Gaussian case and by \citet{loh-buhlmann} in general linear structural equation models, 
\begin{equation*}
\Sigma_G = (I - A_G)^{-1} D (I - A_G)^{-T},
\end{equation*}
where $D = $ diag$(\sigma^2_1,...,\sigma^2_p)$.

First, suppose that $\sigma^2_i = 1$ for all $i$. We consider the conditional covariance $\Sigma_G(i,j \mid S)$ where $i$ and $j$ are non-neighbours, and $S$ is the $\gamma$-local d-separator, as defined above. Let $\pi$ denote a d-connecting path between $i$ and $j$ of length $l(\pi)$, and $\rho_1, \ldots, \rho_l$ denote the edge weights along the path. By conditioning on $S$, we have that covariance is only induced through d-connecting paths of length greater than $\gamma$, as referenced in the main paper. Then,

\begin{align*}
\Sigma_G(i,j \mid S) = \sum_{ \substack{ \pi: i \leftrightarrow j \\ \pi \cap S = \emptyset }} \sum_{\pi: l(\pi) = l} \prod_{k = 1}^l \rho_{k}  = \sum_{ \substack{ \pi: i \leftrightarrow j \\ l(\pi) = \gamma+1}}^{p-1} \sum_{\pi: l(\pi) = l} \prod_{k = 1}^l \rho_{k}.
\end{align*}

Therefore:
\begin{align*}
| \Sigma_G(i,j \mid S) | &= \Bigg| \sum_{ \substack{ \pi: i \leftrightarrow j \\ l(\pi) = \gamma+1}}^{p-1} \sum_{\pi: l(\pi) = l} \prod_{k = 1}^l \rho_{k} \Bigg| \\
&\le \sum_{ \substack{ \pi: i \leftrightarrow j \\ l(\pi) = \gamma + 1}}^{p-1} \sum_{\pi: l(\pi) = l} \prod_{k = 1}^l | \rho_{k} | &&\text{(by triangle inequality)} \\
&= O(\beta^\gamma). &&\text{(by Assumption 4)} 
\end{align*}

Now, suppose that not all $\sigma^2_i = 1$. Then, let $\sigma^2_{max} = \max_i \sigma^2_i$. We have:
\begin{align*}
| \Sigma_G(i,j \mid S) | 
&\le \sum_{ \substack{ \pi: i \leftrightarrow j \\ l(\pi) = \gamma + 1}}^{p-1} \sum_{\pi: l(\pi) = l} \sigma^2_{max}  \prod_{k = 1}^l | \rho_{k} |  \\
&= \sigma^2_{max} \sum_{ \substack{ \pi: i \leftrightarrow j \\ l(\pi) = \gamma + 1}}^{p-1} \sum_{\pi: l(\pi) = l} \prod_{k = 1}^l | \rho_{k} | \\
&= \sigma^2_{max} O(\beta^\gamma) \\
&= O(\beta^\gamma). &&\text{(by Assumption 2)}
\end{align*}

Finally, we have $|\rho(i,j \mid S)| = \cfrac{|\Sigma_G(i,j \mid S)|}{\sqrt{\Sigma_G(i,i \mid S) \Sigma_G(j,j \mid S)}} = O(\beta^\gamma)$ by Assumption 2, since the conditional variances are functions of the marginal variances, which are bounded. 

\end{proof}

Next, we show the same result by assuming directed walk-summability of the model. 

\begin{lemma}
Under Assumptions 1-3 and Assumption~\ref{assum::walksum}, the partial correlation between non-neighbours $i$ and $j$ satisfies 
\[
\min\limits_{ \substack{S \in S_{\eta, \gamma} } } |\rho(i,j \mid S)| = O(\beta^\gamma).
\] 
where $S_{\eta, \gamma}$ is the set of $\gamma$-local d-separators of size at most $\eta$. 
\end{lemma}

\begin{proof}

Recall from the proof of Lemma 1, 
\begin{equation*}
\Sigma_G = (I - A_G)^{-1} D (I - A_G)^{-T},
\end{equation*}
where $D = $ diag$(\sigma^2_1,...,\sigma^2_p)$. First, suppose that $\sigma^2_i = 1$ for all $i$. Then, we can write:

\begin{align*}
\Sigma_G &= \left(\sum_{r=0}^\infty A_G^r \right) \left( \sum_{r=0}^\infty A_G^r \right)^T \\
&= \left(\sum_{r=0}^\gamma A_G^r + \sum_{r=\gamma+1}^\infty A_G^r  \right) \left( \sum_{r=0}^\gamma A_G^r + \sum_{r=\gamma+1}^\infty A_G^r  \right)^T . \\
\end{align*}

Now, let $\Sigma_H$ denote the covariance matrix induced by only considering d-connecting paths of length at most $\gamma$. For convenience, let $\Lambda_H := \sum_{r=0}^\gamma A_G^r$ and $R_\gamma :=  \sum_{r=\gamma+1}^\infty A_G^r $. Considering their spectral norms, denoted by $\| \cdot \|$, we have by walk-summability that $\| \Lambda_H \| \le  \dfrac{1 - \beta^{\gamma + 1}}{1-\beta}$ and $\| R_\gamma \| \le  \dfrac{\beta^{\gamma + 1}}{1-\beta}$. Then,

\begin{align*}
\Sigma_G &= ( \Lambda_H +  R_\gamma ) ( \Lambda_H +  R_\gamma )^T \\
&=  \Lambda_H  \Lambda^T_H + \Lambda_H R_\gamma^T + R_\gamma \Lambda_H^T + R_\gamma R_\gamma^T \\
&=  \Sigma_H + \Lambda_H R_\gamma^T + R_\gamma \Lambda_H^T + R_\gamma R_\gamma^T
\end{align*}

Now, defining $E_\gamma := \Sigma_G - \Sigma_H$ and taking spectral norms, we get:
\begin{align*}
\| E_\gamma \| = \| \Sigma_G - \Sigma_H \| 
&= \| \Lambda_H R_\gamma^T + R_\gamma \Lambda_H^T + R_\gamma R_\gamma^T \| \\
&\le \| \Lambda_H R_\gamma^T \| + \| R_\gamma \Lambda_H^T \| + \| R_\gamma R_\gamma^T \| \\
&\le 2\| \Lambda_H \| \| R_\gamma \| + \| R_\gamma \|^2 \\
&\le 2 \bigg( \dfrac{1 - \beta^{\gamma + 1}}{1-\beta} \bigg) \bigg( \dfrac{\beta^{\gamma + 1}}{1-\beta} \bigg) + \bigg( \dfrac{\beta^{\gamma + 1}}{1-\beta} \bigg)^2 \\
&= \dfrac{2\beta^{\gamma + 1} - 2\beta^{2\gamma + 2}}{(1-\beta)^2} +  \dfrac{\beta^{2\gamma + 2}}{(1-\beta)^2} \\
&= \dfrac{2\beta^{\gamma + 1} - \beta^{2\gamma + 2}}{(1-\beta)^2}  \\
&= \dfrac{\beta^{\gamma + 1} (2 - \beta^{\gamma + 1})}{(1-\beta)^2}  \\
&= O(\beta^\gamma).
\tag{S1}
\end{align*}

Now, suppose that not all $\sigma^2_i = 1$. Then, following the same expansion of $\Sigma_G$ as above, we have:
\begin{equation*}
\| E_\gamma \| = \|D\| \dfrac{\beta^{\gamma + 1} (2 - \beta^{\gamma + 1})}{(1-\beta)^2} = \sigma^2_{max} \dfrac{\beta^{\gamma + 1} (2 - \beta^{\gamma + 1})}{(1-\beta)^2} = O(\beta^\gamma),
\tag{S2}
\label{eqn:egamma}
\end{equation*}
by Assumption 2, where $\sigma^2_{max} = \max_i \sigma^2_i$. \\

We now show that $|\rho(i,j \mid S)| = O(\| E_\gamma \|) = O(\beta^\gamma) $ where $S$ is a $\gamma$-local d-separator between $i$ and $j$. Let $A = \left\{i,j\right\} \cup S$ and $B = V \setminus A$. Consider the marginal precision matrix, $P := \{\Sigma_G(A,A)\}^{-1}$. Then, using the Schur complement, we can write this as 
\begin{equation*}
P = \Sigma_G^{-1}(A,A) - \Sigma_G^{-1}(A,B) \{ \Sigma_G^{-1}(B,B) \}^{-1} \Sigma_G^{-1}(B,A). 
\end{equation*}

Specifically, the partial correlation of $X_i$ and $X_j$ conditional on $S$ is given by $\frac{P_{1,2}}{(P_{1,1}P_{2,2})^{1/2}} = O(P_{1,2})$, by Assumption 2. \\

Recall from (S1) that $\Sigma_G = \Sigma_{H} + E_\gamma$. Let $F_\gamma$ be the matrix such that $\Sigma_G^{-1} = \Sigma_{H}^{-1} + F_\gamma$. Because $\Sigma_H$ only considers covariance induced by paths of length at most $\gamma$, we have that $\Sigma_{H}^{-1}(A,A)_{1,2} = 0$. \\

Thus,
\begin{align*}
| \{\Sigma_G(A,A) \}^{-1}_{1,2}| &= | \Sigma_G^{-1}(A,A)_{1,2} - \Sigma_G^{-1}(A,B) \{ \Sigma_G^{-1}(B,B) \}^{-1} \Sigma_G^{-1}(B,A)_{1,2} | \\
&= | \Sigma_{H}^{-1}(A,A)_{1,2} + F_{\gamma}(A,A)_{1,2} - \Sigma_G^{-1}(A,B) \{ \Sigma_G^{-1}(B,B) \}^{-1} \Sigma_G^{-1}(B,A)_{1,2} | \\
&= | F_{\gamma}(A,A)_{1,2} - \Sigma_G^{-1}(A,B) \{ \Sigma_G^{-1}(B,B) \}^{-1}  \Sigma_G^{-1}(B,A)_{1,2} | \\
&\le \| F_{\gamma}(A,A) - \Sigma_G^{-1}(A,B) \{ \Sigma_G^{-1}(B,B) \}^{-1}  \Sigma_G^{-1}(B,A) \|_{\infty} \\
&\le \| F_{\gamma}(A,A) - \Sigma_G^{-1}(A,B) \{ \Sigma_G^{-1}(B,B) \}^{-1} \Sigma_G^{-1}(B,A) \| .
\end{align*}

However, since $\Sigma_G^{-1}(A,B) \{ \Sigma_G^{-1}(B,B) \}^{-1} \Sigma_G^{-1}(B,A)$ is positive semi-definite, $| \{\Sigma_G(A,A) \}^{-1}_{1,2} | \le \| F_{\gamma}(A,A) \|$. We next show that $\|F_\gamma\| = O(\|E_\gamma\|) = O(\beta^\gamma)$. First, note that:
\begin{align*}
F_\gamma &= \Sigma_G^{-1} - \Sigma_{H}^{-1} \\
&= (\Sigma_{H} + E_\gamma)^{-1}  - \Sigma_{H}^{-1} \\ 
&= \Sigma_{H}^{-1} - \Sigma_{H}^{-1}(E_\gamma^{-1} + \Sigma_{H}^{-1})^{-1}\Sigma_{H}^{-1} -  \Sigma_{H}^{-1}  &&\text{(by Woodbury)} \\
&=  - \Sigma_{H}^{-1}(E_\gamma^{-1} + \Sigma_{H}^{-1})^{-1}\Sigma_{H}^{-1} 
\end{align*}

Then, taking spectral norms, and noting that $\Sigma_G = \Sigma_H + E_\gamma$:
\begin{align*}
\|F_\gamma\| &\le \|\Sigma_{H}^{-1}\| \|(E_\gamma^{-1} + \Sigma_{H}^{-1})^{-1}\| \|\Sigma_{H}^{-1}\| &&\text{(by sub-multiplicity)} \\
&= \|\Sigma_{H}^{-1}\|^2 \|E_\gamma - E_\gamma(\Sigma_{H} + E_\gamma)^{-1} E_\gamma \| &&\text{(by Woodbury)} \\
&= \|\Sigma_{H}^{-1}\|^2 \|E_\gamma(I - \Sigma_G^{-1}E_\gamma)\| \\
&\le \|\Sigma_{H}^{-1}\|^2 \|E_\gamma\| \|I - \Sigma_G^{-1}E_\gamma\| &&\text{(by sub-multiplicity)} \\
&\le \|\Sigma_{H}^{-1}\|^2 \|E_\gamma\| ( 1 + \|\Sigma_G^{-1}E_\gamma\|) &&\text{(by triangle inequality)} \\
&\le \|\Sigma_{H}^{-1}\|^2 \|E_\gamma\| ( 1 + \|\Sigma_G^{-1}\| \|E_\gamma\|)  &&\text{(by sub-multiplicity)} \\
&\le J \|E_\gamma\|^2, &&\text{(by boundedness of $\| \Sigma_{G}^{-1} \| \ge | \Sigma_{H}^{-1} \|$)} 
\end{align*}

for some constant $J$. Then, by walk-summability, $\|F_\gamma\| = O(|E_\gamma\|)$. Hence, $|\rho(i,j \mid S)| = O(\beta^\gamma)$ by (S1).

\end{proof}

By combining the result from either Lemma 1 or 2 with the $\lambda$-path-faithfulness assumption, we achieve oracle consistency for our algorithm given a threshold level $\alpha$ such that $\alpha = O(\lambda)$, $\alpha = \Omega(\beta^\gamma)$. 

Next, we consider the finite sample setting, and establish a concentration inequality for sample partial correlations, under sub-Gaussian distributions, using a result from \citet{ravikumar}. 

\begin{lemma}
Assume $X = (X_1, ..., X_p)$ is a zero-mean random vector with covariance matrix $\Sigma$ such that each $X_i / \Sigma_{ii}^{1/2}$ is sub-Gaussian with parameter $\sigma$. Assume $\|\Sigma\|_\infty$ and $\sigma$ are bounded. Then, the empirical partial correlation obtained from n samples satisfies, for some bounded $M > 0$:

\begin{equation*}
P\bigg( \max_{i \neq j, |S| \le \eta} |\hat{\rho}(i,j \mid S) - \rho(i,j \mid S)| > \epsilon \bigg) \le 4 \bigg(3 + \frac{3}{2}\eta + \frac{1}{2}\eta^2 \bigg) p^{\eta+2} \exp \bigg( -\frac{n\epsilon^2}{M} \bigg)
\end{equation*}

for all $\epsilon \le \max_i ({\Sigma_{ii}})8(1+4\sigma^2)$.
\end{lemma}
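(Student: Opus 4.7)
The plan is to reduce the bound on partial-correlation deviations to an entrywise concentration bound on the sample covariance matrix, and then take a union bound over pairs and conditioning sets. First, fix an ordered pair $(i,j)$ and a set $S \subset V \setminus \{i,j\}$ with $|S| = k \le \eta$, and let $A = \{i,j\} \cup S$, so $|A| = k+2 \le \eta+2$. Both $\rho(i,j \mid S)$ and $\hat{\rho}(i,j \mid S)$ admit the closed form
\[
\rho(i,j \mid S) \;=\; -\frac{\Theta_{ij}}{\sqrt{\Theta_{ii}\,\Theta_{jj}}}, \qquad \Theta = (\Sigma_{AA})^{-1},
\]
with the empirical version obtained by replacing $\Sigma_{AA}$ by $\hat{\Sigma}_{AA}$. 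Hence $\rho(i,j \mid S)$ is a smooth function of the at most $(|A|+1)|A|/2 \le (\eta^2 + 5\eta + 6)/2$ distinct entries of the submatrix $\Sigma_{AA}$.

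Next, I would invoke the entrywise concentration inequality of Ravikumar et al.\ for sub-Gaussian sample covariances: for any indices $u,v$ and any $\delta \le \max_i \Sigma_{ii} \cdot 8(1 + 4\sigma^2)$,
\[
P\!\left(|\hat{\Sigma}_{uv} - \Sigma_{uv}| > \delta\right) \;\le\; 4 \exp\!\left(-\tfrac{n \delta^2}{C}\right),
\]
where $C$ depends only on $\sigma$ and $\max_i \Sigma_{ii}$. By a Lipschitz argument, the event $|\hat{\rho}(i,j \mid S) - \rho(i,j \mid S)| > \epsilon$ forces $|\hat{\Sigma}_{uv} - \Sigma_{uv}| > \epsilon/L$ for at least one $(u,v) \in A \times A$, where $L$ is a Lipschitz constant for the map $\Sigma_{AA} \mapsto \rho(i,j \mid S)$ on the admissible positive-definite domain. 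Union-bounding over the entries of $\Sigma_{AA}$, over the $\binom{p}{2}$ unordered pairs $(i,j)$, and over the $\sum_{k=0}^{\eta} \binom{p-2}{k} \le (\eta+1) p^{\eta}$ conditioning sets of size up to $\eta$, and then absorbing $L$, $C$, and combinatorial constants into a single bounded $M$, yields the stated form with the polynomial factor $p^{\eta + 2}$ and prefactor of order $\eta^2$.

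The main obstacle will be establishing a uniform Lipschitz constant $L$. By the identity $d\Theta = -\Theta\,(d\Sigma_{AA})\,\Theta$, the Jacobian of matrix inversion has operator norm bounded by $\|\Sigma_{AA}^{-1}\|^2$. Cauchy interlacing gives $\lambda_{\min}(\Sigma_{AA}) \ge \lambda_{\min}(\Sigma) \ge 1/\|\Sigma^{-1}\|$, which is bounded below uniformly in $A$ under Assumption~\ref{assum::cov}; consequently $\|\Theta\|$ and the diagonal entries $\Theta_{ii}, \Theta_{jj}$ remain controlled, and composing with the smooth map $(a,b,c) \mapsto -a/\sqrt{bc}$ on a domain where $b,c$ are bounded away from zero yields the required uniform Lipschitz bound. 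The range restriction $\epsilon \le \max_i \Sigma_{ii} \cdot 8(1 + 4\sigma^2)$ is inherited directly from the range of validity of the underlying Ravikumar et al.\ concentration bound; the remaining steps are bookkeeping.
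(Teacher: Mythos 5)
Your proposal is correct, and it shares the paper's overall architecture: reduce everything to the entrywise sub-Gaussian covariance concentration bound of Ravikumar et al., then union-bound over the $O(\eta^2)$ relevant covariance entries and the $O(p^{\eta+2})$ triples $(i,j,S)$. Where you genuinely diverge is in the reduction step itself. The paper never inverts a submatrix: it uses the recursive formula for partial correlations to express $\rho(i,j \mid S)$ through lower-order correlations, and then simply asserts the event inclusion ``for a sufficiently large $C>0$,'' leaving the Lipschitz-type control entirely implicit. You instead work from the closed form $\rho(i,j \mid S) = -\Theta_{ij}/\sqrt{\Theta_{ii}\Theta_{jj}}$ with $\Theta = (\Sigma_{AA})^{-1}$, and control the Jacobian via $d\Theta = -\Theta\,(d\Sigma_{AA})\,\Theta$ together with eigenvalue interlacing; this makes the constant explicit and is arguably more rigorous than the paper's sketch, which hides exactly this step. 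Two caveats. First, your uniform Lipschitz constant invokes Assumption~2 (boundedness of $\|\Sigma^{-1}\|$), which is not among the lemma's stated hypotheses; it does hold wherever the lemma is applied in the paper, and the paper's hidden constant $C$ in fact requires comparable nondegeneracy (its recursive formula has denominators $(1-\rho^2)^{1/2}$ that must stay bounded away from zero), so you are making explicit what the paper buries --- but a careful write-up should flag this. Second, your contrapositive step (``$|\hat{\rho}-\rho| > \epsilon$ forces some entry deviation exceeding $\epsilon/L$'') needs the Lipschitz bound to hold along the whole segment between $\Sigma_{AA}$ and $\hat{\Sigma}_{AA}$, i.e., $\hat{\Sigma}_{AA}$ must remain well-conditioned; since $|A| \le \eta+2$, entrywise closeness of order $\epsilon/L$ gives operator-norm closeness of order $(\eta+2)\epsilon/L$, so this is automatic for small $\epsilon$, and the remaining bounded range of $\epsilon$ is absorbed by enlarging $M$ --- worth one explicit sentence. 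A minor observation: your entry count $(\eta+2)(\eta+3)/2 = 3 + \tfrac{5}{2}\eta + \tfrac{1}{2}\eta^2$ actually agrees with the paper's own enumeration $3 + 2\eta + (\eta^2+\eta)/2$, whereas the prefactor $3 + \tfrac{3}{2}\eta + \tfrac{1}{2}\eta^2$ displayed in the lemma appears to contain a small arithmetic slip; this is immaterial since all such constants are absorbed into $M$.
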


\begin{proof}

Using the recursive formula for partial correlation, for any $k \in S$
\begin{equation*}
\rho(i,j \mid S) = \cfrac{\rho(i,j \mid S \setminus k) - \rho(i, k \mid S \setminus k)  \rho(k, j \mid S \setminus k) } { (1 - \rho^2 (i, k \mid S \setminus k))^{1/2} (1 - \rho^2 (k, j \mid S \setminus k))^{1/2} }.
\end{equation*}

For example, with $S = \left\{ k \right\}$, we can simplify this to:
\begin{equation*}
\rho(i,j \mid S) = \cfrac{\rho(i,j) - \rho(i, k)  \rho(k, j) } { (1 - \rho^2 (i, k))^{1/2} (1 - \rho^2 (k, j))^{1/2} },
\end{equation*}
where $\rho(i,j) = \Sigma_{ij} / (\Sigma_{ii}\Sigma_{jj})^{1/2}$.

Rewriting in terms of elements of $\Sigma$, we then decompose the empirical partial correlation deviance from the true partial correlation into the deviances of covariance terms. Here, the event of the empirical partial correlation being within $\epsilon$ distance of the true partial correlation is contained in the union of the empirical covariance terms being within $C \epsilon$ distance of the true covariance terms for a sufficiently large $C > 0$:
\begin{align*}
\bigg[  |\hat{\rho}(i,j \mid S) - \rho(i,j \mid S)| > \epsilon \bigg] &\subset \bigg[  |\hat{\Sigma}_{ij} - \Sigma_{ij}| > C\epsilon \bigg] \bigcup \bigg[  |\hat{\Sigma}_{ii} - \Sigma_{ii}| > C\epsilon \bigg] \bigcup \bigg[  |\hat{\Sigma}_{jj} - \Sigma_{jj}| > C\epsilon \bigg] \\
&\bigcup_{k \in S} \bigg[  |\hat{\Sigma}_{ik} - \Sigma_{ik}| > C\epsilon \bigg] \\
&\bigcup_{k \in S} \bigg[  |\hat{\Sigma}_{jk} - \Sigma_{jk}| > C\epsilon \bigg] \\
&\bigcup_{k \le k' \in S} \bigg[  |\hat{\Sigma}_{kk'} - \Sigma_{kk'}| > C\epsilon \bigg],
\end{align*}

The number of events on the right-hand side is $3 + |S| + |S| + |S|^2 - $ $|S| \choose 2$. For $|S| \le \eta$, the number of events is then bounded by $3 + \frac{3}{2}\eta + \frac{1}{2}\eta^2$. Then, by applying Lemma 1 in \citet{ravikumar}, we have that, for any $i,j$:

\begin{equation*}
Pr\bigg(|\hat{\rho}(i,j \mid S) - \rho(i,j \mid S)| > \epsilon \bigg) \le 4 \bigg(3 + \frac{3}{2}\eta + \frac{1}{2}\eta^2 \bigg) \exp \bigg( -\frac{n\epsilon^2}{K} \bigg),
\end{equation*}

for some $K > 0$, bounded when $\|\Sigma\|_\infty$ and $\sigma$ are bounded. From here, the result follows.

\end{proof} 

Combining the results established in Lemmas 1-3, we now prove the consistency of our algorithm in the finite sample setting. 

Let $\alpha$ denote the threshold where if $\hat{\rho}_G(i,j \mid S) < \alpha$, the edge $(i,j)$ is deleted. Let $G_S$ denote the true undirected skeleton of $G$, and let $S_{\eta, \gamma}$ denote the set of $\gamma$-local d-separators or size at most $\eta$. \\

For any $(i,j) \not \in G_S$, define the false positive event as 
\begin{equation*}
F_1(i,j) = \bigg[ \min\limits_{ \substack{S \in S_{\eta, \gamma} } } |\hat{\rho}_G(i,j \mid S)| > \alpha \bigg].
\end{equation*}

Define 
\begin{equation*}
\theta_{max} = \max\limits_{(i,j) \not \in G_S} \min\limits_{ S \in S_{\eta, \gamma} } |\rho_G(i,j \mid S)| 
\end{equation*}
and
\begin{equation*}
\hat{\theta}_{max} = \max\limits_{(i,j) \not \in G_S} \min\limits_{ S \in S_{\eta, \gamma} } |\hat{\rho}_G(i,j \mid S)|.
\end{equation*}

Consider
\begin{align*}
Pr \bigg\{ \bigcup_{(i,j) \not \in G_S} F_1(i,j) \bigg\}
&= Pr( \hat{\theta}_{max} > \alpha) \\ 
&= Pr( |\hat{\theta}_{max} - \theta_{max}| > | \alpha - \theta_{max} | ) \\ 
&= O\bigg[p^{\eta+2} \exp \bigg\{- \frac{n(\alpha - \theta_{max})^2}{M} \bigg \} \bigg] &&\text{(by Lemma 3)} 
\end{align*}
where $\theta_{max} =  O(\beta^\gamma)$ by Lemma 1 and 2.

For any true edge $(i,j) \in G_S$, define the false negative event as
\begin{equation*}
F_2(i,j) = \bigg[ \min\limits_{\substack{ S \subset V \setminus \left\{i,j\right\}, |S| \le \eta } } |\hat{\rho}_G(i,j \mid S)| < \alpha \bigg].
\end{equation*}

Define 
\begin{equation*}
\theta_{min} = \min_{(i,j) \in G_S} \min\limits_{ S \subset V \setminus \left\{i,j\right\}, |S| \le \eta} |{\rho}_G(i,j \mid S)| 
\end{equation*}
and
\begin{equation*}
\hat{\theta}_{min} = \min_{(i,j) \in G_S} \min\limits_{ S \subset V \setminus \left\{i,j\right\}, |S| \le \eta} |\hat{\rho}_G(i,j \mid S)|.
\end{equation*}

Consider
\begin{align*}
Pr \bigg\{ \bigcup_{(i,j) \in G_S} F_2(i,j) \bigg\}
&= Pr( \hat{\theta}_{min} < \alpha) \\ 
&= Pr( | \theta_{min}  - \hat{\theta}_{min} | > | \theta_{min} - \alpha |) \\ 
&= O\bigg[p^{\eta+2} \exp \bigg\{ - \frac{n(\alpha - \theta_{min})^2}{K} \bigg\} \bigg] &&\text{(by Lemma 3)} 
\end{align*}
where $\theta_{min} =  \Omega(\lambda)$ by restricted path-faithfulness assumption. \\

Under our assumptions, we have that $n = \Omega \{ (\log p)^{1/1 - 2c} \}$, and $\lambda = \Omega(n^{-c})$ with $c \in (0, 1/2)$. Rewriting in terms of $\lambda$, we have $n = \Omega ( \frac{\log p}{\lambda^2} )$. Then, by selecting $\alpha$ such that $\alpha = O(\lambda)$, $\alpha = \Omega(\beta^\gamma)$, we have $Pr \{ \bigcup_{(i,j) \not \in G_S} F_1(i,j) \} = o(1)$ and $Pr \{ \bigcup_{(i,j) \in G_S} F_2(i,j) \} = o(1)$. This completes the proof of Theorem~\ref{thm::consistency}. 

\section*{Appendix B. Additional Simulation Results}
\label{app:sims}

In this section, we present some simulation results comparing our algorithms to the PC-Algorithm in estimating dense graphs. The simulation setup is otherwise identical to that described in the main paper. We consider a low-dimensional setting, with $p = 100$ and $n = 200$, as well as a high-dimensional setting, with $p = 200$ and $n = 100$. For Erd\H{o}s-R{\'e}nyi graphs, we use a constant edge probability of 0.05, corresponding to an expected degree of 5 for the low-dimensional graph and 10 for the high-dimensional graph. For power law graphs, we use an expected degree of 6 in both graphs. 

\begin{figure}[h!]
\centering
\includegraphics[width=0.75\textwidth]{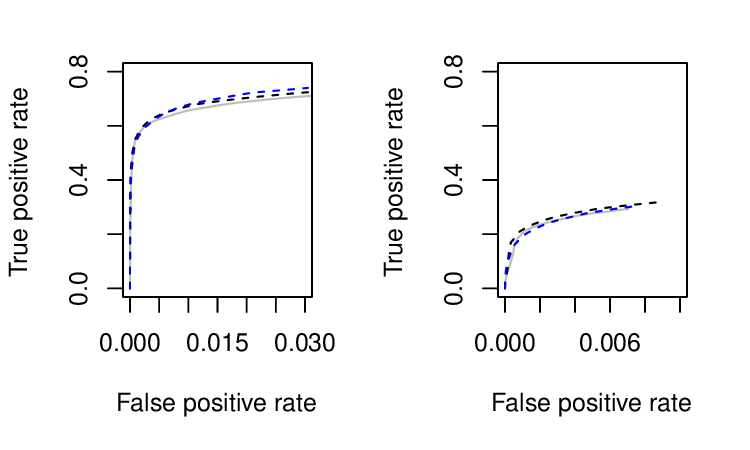}
\caption{Average true vs. false positive rates for PC-Algorithm (grey solid line), rPC-approx (black dashed line), and rPC-full (blue dashed line) estimating Erd\H{o}s-R{\'e}nyi graphs. Left: $p=100, n = 200$, average degree 5; right: $p = 200, n = 100$, average degree 10.}
\label{fig:ERdense}
\end{figure}  

\begin{figure}[h!]
\centering
\includegraphics[width=0.75\textwidth]{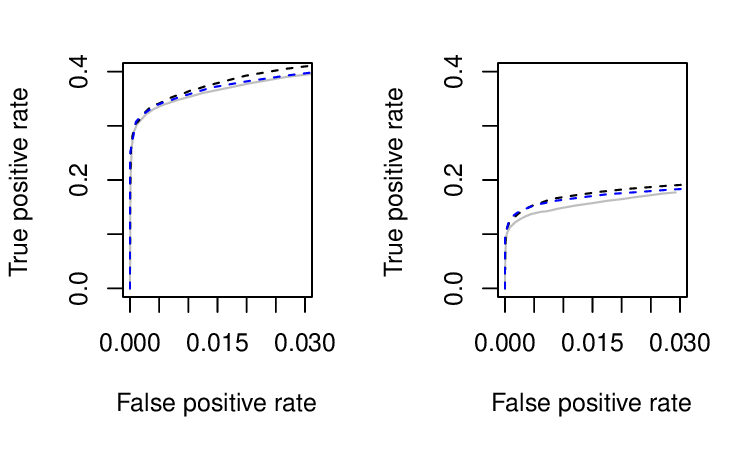}
\caption{Average true vs. false positive rates for PC-Algorithm (grey solid line), rPC-approx (black dashed line), and rPC-full (blue dashed line) estimating power law graphs with average degree 6. Left: $p=100, n = 200$; right: $p = 200, n = 100$.}
\label{fig:PLdense}
\end{figure}  

While estimation quality is worse in the dense setting for all methods, we observe in Figure~\ref{fig:ERdense} and Figure~\ref{fig:PLdense} similar trends as in the main paper. Our methods perform as well as the PC-Algorithm in estimating Erd\H{o}s-R{\'e}nyi graphs, and shows some improvement for power law graphs. Both versions of rPC give similar results.  

We also include more simulation results comparing path faithfulness to the restricted strong faithfulness assumption. As in Section~\ref{sec:faithfullness}, 1000 random DAGs were generated from Erd\H{o}s-R{\'e}nyi and power law families, with edge weights drawn independently from a $\text{Uniform}(-1, 1)$ distribution. Here, we consider graphs of size $p = 10$ and $p = 20$, with faithfulness condition parameters $\lambda = 0.001$ and $\eta = 2$. The results are shown in Table~\ref{tab:faithsim10} and Table~\ref{tab:faithsim30}. 

\begin{table}
\caption{\label{tab:faithsim10} Empirical probabilities of random DAGs of size $p = 10$ satisfying faithfulness conditions; RSF refers to restricted strong faithfulness of the PC-Algorithm, and PF refers to path faithfulness of reduced PC (rPC).}
\centering
\begin{tabular}{cccc}
Graph family & Expected degree & $Pr(\text{RSF})$ & $Pr(\text{PF})$ \\
\hline
Erd\H{o}s-R{\'e}nyi           & 2               & 0.94     & 0.98    \\
Erd\H{o}s-R{\'e}nyi           & 5               & 0.10      & 0.61     \\
Power law           & 2               & 0.95     & 0.97   \\
Power law           & 6               & 0.08      & 0.60   
\end{tabular}
\end{table}

\begin{table}
\caption{\label{tab:faithsim30} Empirical probabilities of random DAGs of size $p = 30$ satisfying faithfulness conditions; RSF refers to restricted strong faithfulness of the PC-Algorithm, and PF refers to path faithfulness of reduced PC (rPC).}
\centering
\begin{tabular}{cccc}
Graph family & Expected degree & $Pr(\text{RSF})$ & $Pr(\text{PF})$ \\
\hline
Erd\H{o}s-R{\'e}nyi           & 2               & 0.66     & 0.86    \\
Erd\H{o}s-R{\'e}nyi           & 5               & 0      & 0.01     \\
Power law           & 2               & 0.04     & 0.51    \\
Power law           & 6               & 0      & 0.01   
\end{tabular}
\end{table}

Although it is harder for both conditions to be met as graph size and density increase, we see that path faithfulness is still more likely to be satisfied than restricted strong faithfulness, particularly with power law graphs. 

Finally, we consider the plausibility of Assumption~\ref{assum::bddtrek} when the SEM coefficients are not bounded by 1 in absolute value. Suppose that the data matrix $X$ has been normalized by column-wise standard deviations. We show that this transformation preserves the original network structure, and leads to most edge weights being bounded by 1 in absolute value. Consider an edge $j \rightarrow k$ and let $W_j := \left\{ X_i : i \in pa(k) \setminus \left\{ j \right\} \right\}$. Then, taking the conditional covariance:
\begin{align*}
Cov(X_j, X_k | W_j ) &= Cov \left(X_j, \rho_{jk} X_j + \sum_{i \in pa(k)\setminus j} \rho_{ik} X_i + \epsilon_k \bigg| W_j \right) \\
&= \rho_{jk}Var(X_j | W_j ).
\end{align*}
We can therefore write:
\begin{equation*}
\rho_{jk} = \cfrac{Cov(X_j, X_k | W_j)}{Var(X_j | W_j)}.
\end{equation*}
Now, let $\tilde{X}_k = {X}_k/sd({X}_k)$ for all $k$. Consider the edge weights $\tilde{\rho}_{jk}$ corresponding to the SEM for this transformed data. We have:
\begin{equation*}
\tilde{\rho}_{jk} = \cfrac{sd(X_j)}{sd(X_k)} \cfrac{Cov(X_j, X_k | W_j)}{Var(X_j | W_j)} = \cfrac{sd(X_j)}{sd(X_k)} \rho_{jk} 
\end{equation*}
Clearly, $\tilde{\rho}_{jk} = 0$ if and only if $\rho_{jk} = 0$. Therefore, we recover the same network by applying our algorithm to the transformed data. 
Furthermore, 
\begin{align*}
|\tilde{\rho}_{jk}| &= \sqrt{\frac{Var(X_j)}{Var(X_k)} \rho^2_{jk}} \\
&= \sqrt{\frac{Var(X_j) \rho^2_{jk}}{Var(X_j) \rho^2_{jk} + \sigma^2_k + \sum_{i \in pa(k) \setminus \left\{j\right\} } \rho^2_{ik} Var(X_i) + 2 \sum_{i, i' \in pa(k)} \rho_{ik}\rho_{i'k} Cov(X_i, X_i') }}
\end{align*}

Thus, $|\tilde{\rho}_{jk}| > 1$ only if
\begin{equation*}
\sigma^2_k + \sum_{i \in pa(k) \setminus \left\{j\right\} } \rho^2_{ik} Var(X_i)  + 2 \sum_{i, i' \in pa(k)} \rho_{ik}\rho_{i'k} Cov(X_i, X_i') < 0
\end{equation*}
Rewriting the variance and covariance terms as trek sums, we obtain:
\begin{equation*}
\sigma^2_k + \sum_{i \in pa(k) \setminus \left\{j\right\} } \rho^2_{ik} \sum_{ \substack{\pi: u \leftrightarrow i \\ u \in V \setminus \left\{i\right\} } } \sigma^2_w \prod_{\rho_e \in \pi} \rho^2_e  + 2 \sum_{i, i' \in pa(k)} \rho_{ik}\rho_{i'k} \sum_{ \pi: i \leftrightarrow i' } \sigma_w \prod_{\rho_e \in \pi} \rho_e  < 0
\end{equation*}
where $w$ denotes the source or common node in the trek. 

Since the first two terms in this expression will always be positive, $|\tilde{\rho}_{jk}| > 1$ only if the covariance sum is both negative and greater in magnitude than the sum of the first two terms. A term in the covariance sum will be negative if there is an odd number of negative terms in the product, and the entire sum will be negative only if there are a sufficient number of these to make the whole sum negative. Even then, the sum over squared trek products would need to be smaller than the covariance term sum in absolute value.
Moreover, even if some $|\tilde{\rho}_{jk}| > 1$, the product over the entire trek may still be less than 1 if most of the other coefficients along a trek are small. Thus, we may still expect the product over the trek to be small. 

Next, we provide simulation results showing that standardized SEM coefficients in graph configurations we consider rarely exceed 1 in absolute value. Specifically, we consider Erd\H{o}s-R{\'e}nyi and power law graphs, with $p \in \left\{ 200, 500 \right\}$ having average degree 2. We consider edge weights from both a $\mbox{Unif}(-10, 10)$ and $N(0,3^2)$ distribution. For each graph, we estimate the standardized coefficients using the \texttt{fitDag} function in the \texttt{ggm} library, and compute the proportion of edge weights that are greater than 1 in absolute value. We repeat this process over 1,000 iterations. The results are shown in Table~\ref{tab:coefsim}. This experiment shows that, with high probability, the covariance induced over long treks between any two nodes is indeed small.

\begin{table}
\caption{\label{tab:coefsim} Empirical average probabilities of standardized coefficients exceeding 1 in absolute value.}
\centering
\begin{tabular}{cccc}
Graph family & $p$ & Distribution & $Pr(|\rho| > 1)$ \\
\hline
Erd\H{o}s-R{\'e}nyi           & 200               & Uniform(-10, 10)  & 0.0036    \\
           & 200               & Normal(0, $3^2$)     & 0.0017   \\
           & 500               & Uniform(-10, 10)     & 0.0036    \\
           & 500               & Normal(0, $3^2$)     & 0.0012    \\
Power law           & 200               & Uniform(-10, 10)     & 0.0045    \\
           & 200               & Normal(0, $3^2$)     & 0.0045    \\
           & 500               & Uniform(-10, 10)     & 0.0034    \\
           & 500               & Normal(0, $3^2$)     & 0.0037    \\
\end{tabular}
\end{table}

\clearpage

\bibliographystyle{plainnat}
\bibliography{manuscriptrefs}

\end{document}